\newcommand{\true}{\cmark}
\newcommand{\false}{\xmark}
\newcommand{\cmark}{\ding{51}}%
\newcommand{\xmark}{\ding{55}}
\newcommand{\AS}{\mathbf{AS}}
\newcommand{\A}{\mathcal{A}}
\newcommand{\C}{\mathcal{C}}
\newcommand{\Str}{\mathtt{Str}}
\newcommand{\Sh}{\mathtt{Sh}}
\newcommand{\Att}{Att}
\newcommand{\nameImp}{\mathtt{Imp}}
\newcommand{\nameImpDV}{\mathtt{ImpDV}}
\newcommand{\nameImpSI}{\mathtt{ImpSI}}
\newcommand{\imp}[2]{\nameImp^{#1}_{#2}}
\newcommand{\impDV}[2]{\nameImpDV^{#1}_{#2}}
\newcommand{\impSI}[2]{\nameImpSI^{#1}_{#2}}
\newcommand{\Sem}{\sigma}
\newcommand{\nameHbs}{\mathtt{Hbs}}
\newcommand{\nameMax}{\mathtt{Max}}
\newcommand{\nameCar}{\mathtt{Car}}
\newcommand{\nameCS}{\mathtt{CS}}
\newcommand{\Hbs}{\sigma^{\nameHbs}}
\newcommand{\Max}{\sigma^{\nameMax}}
\newcommand{\Car}{\sigma^{\nameCar}}
\newcommand{\CS}{\sigma^{\nameCS}}
\newcommand{\setAttackArg}{Arg^{-}}
\newcommand{\setAttackAtt}{Att^{-}}
\newcommand{\compOpArg}[1]{\ominus^{\A}_{#1}}
\newcommand{\compOpAtt}[1]{\ominus^{\C}}
\newtheorem{definition}{Definition}
\newtheorem{example}{Example}
\newtheorem{conjecture}{Conjecture}
\newtheorem{property}{Property}
\newtheorem{principle}{Principle}
\newtheorem{theorem}{Theorem}
\newtheorem{proposition}[theorem]{Proposition}
\title{Impact Measures for Gradual Argumentation Semantics}
\author{
  Caren Al Anaissy\\
 CRIL Université d’Artois \& CNRS\\
 France \\
  \texttt{alanaissy@cril.fr} \\
   \And
  Jérôme Delobelle \\
  Universit\'e Paris Cit\'e, LIPADE, F-75006\\
  France\\
  \texttt{jerome.delobelle@u-paris.fr} \\
  \And
  Srdjan Vesic \\
  CRIL CNRS Univ. Artois \\
  France\\
  \texttt{vesic@cril.fr} \\
  \And
  Bruno Yun \\
  Universite Claude Bernard Lyon 1, CNRS, Ecole Centrale de Lyon, INSA Lyon,\\
  Université Lumière Lyon 2, LIRIS, UMR5205\\
  France \\
  \texttt{bruno.yun@univ-lyon1.fr} \\
}
\begin{document}
\maketitle

\begin{abstract}
Argumentation is a formalism allowing to reason with contradictory information by modeling arguments and their interactions.
There are now an increasing number of gradual semantics and impact measures that have emerged to facilitate the interpretation of their outcomes. An impact measure assesses, for each argument, the impact of other arguments on its score.
In this paper, we refine an existing impact measure from Delobelle and Villata and introduce a new impact measure rooted in Shapley values. We introduce several principles to evaluate those two impact measures w.r.t.\ some well-known gradual semantics.
This comprehensive analysis provides deeper insights into their functionality and desirability.\\
\end{abstract}

\keywords{Gradual semantics \and Impact measures \and Argumentation \and Principle-based approach}

\section{Introduction}

Computational argumentation theory stands as an important domain of artificial intelligence (AI), especially in knowledge representation and reasoning. 
It is used -- among others -- for inferring conclusions in decision making problems~\cite{amgoud2009using,muller2012argumentation,gao2016argumentation} and for resolving conflicts of opinion in persuasion and negotiation dialogues ~\cite{rosenfeld2016strategical,amgoud2007unified}. 
It represents knowledge in \textit{argumentation graphs} with arguments as nodes and a binary attack relation for conflicts between pieces of information.
Different semantics can then be applied on those graphs to obtain rational conclusions. In this paper, we focus on \textit{gradual semantics} which evaluate and score each argument w.r.t.\ ``how much'' it is attacked by other arguments.

Recently, explainable AI has garnered more attention for its ability to provide transparency and enhance the understandability of AI-based models and algorithms.
Among the existing notions of explainability in the literature~\cite{miller2019explanation,vilone2021notions}, two notions seem pivotal to paving the way toward explaining the outcomes of argument evaluation: \textit{Causality}~\cite{ha2018designing} and \textit{Feature Attribution}~\cite{ribeiro2016should,lundberg2017unified}. 
Causality refers to the ability to explain the link between what is introduced as input and what results as output. 
Feature attribution means assessing the contribution of input features (e.g., the age or the blood pressure) to the output (e.g., the probability of getting sick) by assigning attribution scores to each feature, avoiding the need to explore the model's internal mechanisms. 

Hence, we argue that -- working toward making argumentation interpretable -- it is essential that the evaluation of an argument can be explained by highlighting the attribution of each argument’s interconnected network to its final evaluation.
Namely, this is achieved by providing to the user an impact measure that returns the contribution of different arguments to the argument’s final evaluation. 

Few impact measures have been defined in the literature~\cite{potyka2018continuous,DBLP:conf/ecsqaru/DelobelleV19,DBLP:conf/sgai/HimeurYBC21,yin2023argument}. 
We believe that there is still the need to explore other impact measures that have not been yet considered for gradual semantics.
For this reason, this paper addresses the need for enhanced impact measures, bringing modifications to existing functions while introducing novel ones. 
Our study navigates the intricate landscape by first revising the impact measure defined in \cite{DBLP:conf/ecsqaru/DelobelleV19}. 
Additionally, we define a complementary impact measure based on the Shapley Contribution Measure~\cite{shapley_value_1953,amgoud_measuring_2017-1}. These impact measures can be used to assess the impact of an argument (or set of arguments) on a given argument for any existing gradual semantics, clarifying and explaining their contributions on the score of another argument.  

The primary contributions of this paper culminate in three facets. 
First, the enhancement of an existing impact measure and the definition of a novel impact measure based on Shapley Contribution Measure. 
Second, the introduction of nine principles for evaluating each (impact, semantics) pair and a full analysis of two impact measures under some well-known gradual semantics. 
Lastly, the implementation of an  online prototype platform where users can input their argumentation graphs, compute the acceptability degrees for a gradual semantics, and obtain the output of our new impact measures.

We start with a comprehensive review of existing literature and methodologies (Section \ref{sec:background}),  transitioning into the modification and definition of impact measures (Section \ref{sec:impact-grad}). Subsequently, we analyse our findings through the defined principles by analysing their satisfiability by different (impact, semantics) pairs (Section \ref{sec:desirable}). The paper concludes by summarising the implications of our work and outlining future directions in this evolving landscape (Section \ref{sec:related-discussion}).

\section{Background Knowledge}

\label{sec:background}

We introduce the argumentation concepts used in this paper.

\begin{definition}[Argumentation Framework]
An argumentation framework (AF) is $\AS = (\A,\C)$, where $\A$ is a finite set of arguments and $\C \subseteq \A \times \A$ is a set of attacks between arguments. 
\end{definition}

The \textit{set of all direct attackers} of $x \in \A$ will be denoted as $\Att(x) = \{ y\in \A \mid (y,x) \in \C\}$. 
Given two AFs $\AS = (\A,\C)$ and $\AS' = (\A',\C')$, $\AS \oplus \AS'$ is the AF $(\A \cup \A', \C \cup \C')$.
For any AF $\AS = (\A,\C)$ and $X \subseteq \A$, $\AS\vert_{X} = (X, \C \cap (X \times X)).$
The \textit{set of external attackers of X} is $ \setAttackArg(X) = \{y \in \A \backslash X\ |\ \exists \ x \in X\ s.t.\ (y,x) \in \C \}$.
The \textit{set of external attacks to X} is the set of attacks from an argument in $\setAttackArg(X)$ to an argument in $X$ and is formally defined as follows: $\setAttackAtt(X) = \{(y,x) \in \C\ |\ y \in \setAttackArg(X)\ and\ x \in X\}$.
Let $x,y \in \A$, a \textit{path} from $y$ to $x$ is a sequence $\langle x_{0},\dots,x_{n} \rangle$ of arguments such that $x_{0} = y$, $x_{n} = x$ and $\forall i$ s.t. $0 \le i < n, (x_{i},x_{i+1}) \in \C$.\\
The attack structure of an argument $x$ is a set of arguments that contains $x$ and all the (direct or indirect) attackers and defenders of $x$.

\begin{definition}[Attack Structure]
\label{attackstructure}
Let $\AS = (\A,\C)$ be an AF and $x \in \A$. The attack structure of $x$ in $\AS$ is $\Str_\AS(x) = \{ x \} \cup \{ y \in \A \mid$ there is a path from $y$ to $x\}$.  
\end{definition}

\begin{example}
\label{ex:1}

Consider the AF $\AS = (\A, \C)$ represented in Figure \ref{fig:AF}. 
We have $\Att(a_4) = \{a_3, a_5, a_8\}$, 
$\setAttackArg(\{a_{8}, a_{10}\}) = \{a_9\}$, 
$\setAttackAtt(\{a_{8}, a_{10}\}) = \{(a_{9},a_{8}),(a_{9},a_{10})\}$,
$\Str_\AS(a_3) = \{a_1,a_2,a_3\}$ and $\Str_\AS(a_4) = \{a_1,a_2,a_3,a_4,a_5,a_6,a_8,a_9,a_{10}\}$.

\begin{figure}[h]
    \begin{center}
    	\begin{tikzpicture}[->,shorten >=0.5pt,auto,node distance=1.2cm, thick,
                    main node/.style={circle,draw,inner sep=2pt,font=\small},
                    second node/.style={circle,draw,inner sep=1pt,font=\small}]
    	\node[main node] (1) {$a_1$};
    	\node[main node] (2) [left of=1] {$a_2$};
    	\node[main node] (3) [below of=2] {$a_3$};
    	\node[main node] (5) [left of=3] {$a_5$};
    	\node[main node] (4) [below of=5] {$a_4$};
    	\node[main node] (6) [above of=5] {$a_6$};
    	\node[main node] (8) [left of=5] {$a_8$};
    	\node[main node] (7) [left of=8] {$a_7$};
    	\node[main node] (9) [above of=8] {$a_9$};
    	\node[second node] (10) [left of=9] {$a_{10}$};
    	\node[second node] (11) [below right of=3] {$a_{11}$};
    	
    	\draw[->,>=latex] (1) to (3);
    	\draw[->,>=latex] (2) to (3);
    	\draw[->,>=latex] (3) to (4);
    	\draw[->,>=latex] (6) to (5);
    	\draw[->,>=latex] (5) to (4);
    	\draw[->,>=latex] (8) to (4);
    	\draw[->,>=latex] (9) to (8);
    	\draw[->,>=latex] (8) to (7);
    	\draw[->,>=latex] (9) to [bend right=15] (10);
    	\draw[->,>=latex] (10) to [bend right=15] (9);
    	\draw[->,>=latex] (1) to [bend right=15] (2);
    	\draw[->,>=latex] (2) to [bend right=15] (1);
    	\end{tikzpicture}
    \end{center}
    \caption{\label{fig:AF}Graphical representation of an argumentation framework.}
\end{figure}
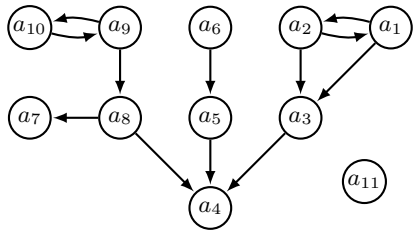


\end{example}

The usual Dung's semantics~\cite{dung_acceptability_1995} induces a two-levels acceptability of arguments (inside or outside of one or all extensions). Gradual semantics (and ranking-based semantics) have been proposed as a more fine-grained approach to argument acceptability~\cite{DBLP:conf/sum/AmgoudB13a,BonzonDKM23}. 
These semantics use a weighting to assign to each argument in the argumentation framework a score, called (acceptability) degree.

\begin{definition}[Weighting]
    A weighting on a set $\mathcal{X}$ is a function from $\mathcal{X}$ to $[0,1]$.
\end{definition}

\begin{definition}[Gradual semantics] \label{def:gradualSemantics}
    A gradual semantics is a function $\sigma$ which associates to each $\AS = (\A,\C)$, a weighting $\sigma_\AS: \A \to [0,1]$ on $\A$. $\sigma_\AS(a)$ is called the degree of $a$. 
\end{definition}

Let us now recall some well-known gradual semantics studied in the literature~\cite{amgoud_evaluation_2022,besnard_logic-based_2001,DBLP:conf/ijcai/OrenYVB22}.\\
The h-categoriser semantics ($\nameHbs$) assigns a value to each argument by taking into account the sum of degrees of its attackers, which themselves take into account the degree of their attackers.

\begin{definition}[h-categoriser]\label{h-categoriser}
The h-categoriser semantics is a gradual semantics $\Hbs$ s.t. for any $\AS = (\A,\C)$ and $a \in \A$:

$$\Hbs_\AS(a) = \frac{1}{1+ \Sigma_{b \in \Att(a)} \Hbs_\AS(b)}$$
\end{definition}

The Card-based semantics ($\nameCar$) favors the number of attackers over their quality. 
This semantics is based on a recursive function which assigns a score to each argument on the basis of the number of its direct attackers and their degrees.

\begin{definition}[Card-based]\label{card-based}
    The Card-based semantics is a gradual semantics $\Car$ s.t. for any $\AS = (\A,\C)$ and $a \in \A$:

$$\Car_\AS(a) = \frac{1}{1+ |\Att(a)| + \frac{\Sigma_{b \in \Att(a)} \Car_\AS(b)}{|\Att(a)|}}$$
\end{definition}

The Max-based semantics ($\nameMax$) favors the quality of attackers over their number.
The degree of an argument is based on the degree of its strongest direct attacker.

\begin{definition}[Max-based]\label{max-based}
    The Max-based semantics is a gradual semantics $\Max$ s.t. any $\AS = (\A,\C)$ and $a \in \A$:

$$\Max_\AS(a) = \frac{1}{1+ \max_{b \in \Att(a)} \Max_\AS(b)}$$
\end{definition}

The last gradual semantics we study is the counting semantics ($\nameCS$) \cite{PLZL15}. 
It assigns a value to each argument by counting the number of their respective attackers and defenders.
An AF is considered as a dialogue game between the proponents of a given argument $x$ (i.e., the defenders of $x$) and the opponents of $x$ (i.e., the attackers of $x$). 
Thus, the degree of an argument is greater if it has many arguments from proponents and few arguments from opponents.
Formally, they first convert a given $AF$ into a matrix $M_{n \times n}$ (where $n$ is the number of arguments in $AF$) which corresponds to the adjacency matrix of $AF$ (as an $AF$ is a directed graph). 
The matrix product of $k$ copies of $M$, denoted by $M^{k}$, represents, for all the arguments in $AF$, the number of defenders (if $k$ is even) or attackers (if $k$ is odd) situated at the beginning of a path of length $k$.
Finally, a normalization factor $N$ (e.g., the matrix infinite norm) is applied to $M$ in order to guarantee the convergence, and a damping factor $\alpha$ is used to have a more refined treatment on different length of attacker and defenders (i.e., shorter attacker/defender lines are preferred).

\begin{definition}[Counting model]
	Let $\AS = (\A,\C)$ be an argumentation framework with $\A = \{x_1,\dots,x_n\}$, $\alpha \in\ (0,1)$ be a damping factor and $k \in \mathds{N}$.
	The $n$-dimensional column vector $v$ over $\A$ at step $k$ is defined by,\\
	\centerline{$v^{k}_{\alpha} = \sum\limits_{i=0}^{k} (-1)^{i} \alpha^i \tilde{M}^i \mathcal{I}$}
	where $\tilde{M} =M/N$ is the normalized matrix with $N$ as normalization factor and $\mathcal{I}$ as the $n$-dimensional column vector containing only 1s.\\ 
	The counting model of $\AS$ is $v_{\alpha} = \lim\limits_{k \rightarrow +\infty} v^{k}_{\alpha} $.
	The degree of $x_i \in \A$ is the $i^{th}$ component of $v_{\alpha}$, denoted by $\CS_\AS(x_{i})$.
\end{definition}

Together with the introduction of these semantics, a number of properties have been defined for gradual semantics to evaluate their behaviours (see \cite{amgoud_evaluation_2022,BeuselinckDV23} for an overview). 
Two of the most well-known are called Independence and Directionality. 
The former ensures that the calculation of the acceptability degrees in two disconnected AFs should be independent while the latter ensures that the acceptability degree of an argument should only rely on the arguments with a directed path to it.

 \begin{property}[Independence~\cite{amgoud_evaluation_2022}]
A semantics $\sigma$ satisfies Independence iff for any two AFs $\AS = (\A,\C)$, $\AS' = (\A',\C')$, where $\A \cap \A' = \emptyset$, for every $y \in \A, \sigma_{\AS}(y) = \sigma_{\AS \oplus \AS'}(y)$.
 \end{property}

  \begin{property}[Directionality~\cite{amgoud_evaluation_2022}]
A semantics $\sigma$ satisfies Directionality iff for any AF $\AS = (\A,\C)$, $\AS' = (\A,\C \cup \{(b,x)\})$, for every $y \in \A$ such that there is no path from $x$ to $y$, $\sigma_{\AS}(y) = \sigma_{\AS'}(y)$.
 \end{property}

To the best of our knowledge, the counting semantics is the only one in the literature not to satisfy these two properties. 
However, we have chosen to include this semantics in our study because the approach used differs from the other gradual semantics studied ($\nameHbs$, $\nameCar$ and $\nameMax$), which all satisfy Independence and Directionality.

 \begin{proposition}
     The counting semantics does not satisfy the Independence and Directionality properties.
     \label{prop:CS_ind_directionality}
 \end{proposition}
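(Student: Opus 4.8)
The plan is to establish both failures by explicit counterexamples, exploiting the one structural feature that distinguishes $\CS$ from the other three semantics: its reliance on a \emph{global} normalisation factor $N$ (the matrix infinite norm, i.e.\ the maximum over all arguments of the number of direct attackers, since the coordinate of $v_\alpha$ for $x_i$ is built from the attacker counts encoded in the rows of $M$). Because $\tilde{M} = M/N$ divides the entire adjacency matrix by this single scalar, any modification of the graph that changes $N$ simultaneously rescales every coordinate of $v_\alpha$ — even coordinates of arguments lying in a disconnected or downstream-irrelevant part of the framework. The whole argument therefore reduces to exhibiting small AFs in which $N$ jumps while the sub-structure feeding a target argument $y$ stays fixed, and then reading off that $\CS_\AS(y)$ nonetheless changes.

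For Independence I would take $\AS = (\{a,b\},\{(b,a)\})$, where $a$ has the single unattacked attacker $b$, so $N=1$; the alternating series truncates after the $i=1$ term at the $a$-coordinate and gives $\CS_\AS(a) = 1-\alpha$. I pair it with a disjoint $\AS' = (\{c,d,e\},\{(c,e),(d,e)\})$ in which $e$ has two attackers. In $\AS \oplus \AS'$ the maximal attacker count is $2$, hence $N=2$; the chain feeding $a$ is untouched (still $b \to a$ with $b$ unattacked), yet rescaling by $N=2$ yields $\CS_{\AS\oplus\AS'}(a) = 1-\alpha/2 \neq 1-\alpha$. As $\alpha \in (0,1)$, Independence fails. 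For Directionality the mechanism is identical: I start from an $\AS$ containing a chain $b \to y$ (so $\CS_\AS(y)=1-\alpha$ under $N=1$) together with a node $x$ that already has exactly one attacker and no outgoing edges, plus a spare argument $c$. Adding the single attack $(c,x)$ produces $\AS'$ in which $x$ has two attackers, forcing $N=2$; since $x$ emits no edges there is no path from $x$ to $y$, so $y$ is precisely an argument the property requires to stay fixed, yet $\CS_{\AS'}(y) = 1-\alpha/2 \neq \CS_\AS(y)$.

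The per-example computations are routine once the graphs are fixed: each target's attack structure is a short chain, so at its coordinate the series $\sum_i (-1)^i \alpha^i \tilde{M}^i \mathcal{I}$ terminates after $i=1$, leaving $1-\alpha\,|\Att(y)|/N$. The only real content — and the point needing care — is the choice of examples: the modification must \emph{strictly} raise the maximal in-degree $N$ so the rescaling genuinely bites, while $y$ must remain disjoint from $\AS'$ (Independence) or unreachable from $x$ (Directionality), and must retain at least one attacker so that its degree depends on $N$ at all. An argument with no attackers keeps degree $1$ regardless of $N$ and would witness neither failure, so that degenerate case is exactly what must be avoided.
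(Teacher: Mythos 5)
Your proposal is correct and takes essentially the same route as the paper: the failure of both properties is pinned on the global normalisation factor $N=\|M\|_\infty$ (the very mechanism the paper itself cites when it notes that $\nameCS$ violates Independence ``because of the normalisation factor $N$''), witnessed by small counterexamples where adding a disjoint component, resp.\ an attack onto a node with no path to the target, raises $N$ from $1$ to $2$ and shifts the target's degree from $1-\alpha$ to $1-\alpha/2$. Your computations are consistent with the paper's own numbers (e.g.\ they reproduce $0.02$, $0.51$ and $0.26$ in Figure~\ref{fig:cs-monotonicity} with $\alpha=0.98$), and your caveat that the target must retain at least one attacker is exactly the right point of care.
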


 \begin{proof}
\input{Proofs/Prop_CS_Independence_Directionality}
\end{proof}

\section{Impact for Gradual Semantics}
\label{sec:impact-grad}

An impact measure is a function that informs on how a set of arguments ``impacts'' the score of a specific argument. 
In this paper, it returns a value within the interval $[-1,1]$, reflecting the impact of the set and its overall polarity (negative, positive or neutral).

\begin{definition}[Impact measure]
    Let $\AS = (\A,\C)$ be an AF and $\Sem$ be a gradual semantics. An impact measure $\nameImp$ takes as input $\AS$ and $\sigma$ and returns a function $\imp{\sigma}{\AS}: 2^\A \times \A \to [-1,1]$. For any $X \subseteq \A$, $y \in \A$, $\imp{\sigma}{\AS}(X,y)$ is the impact of $X$ on $y$ (in $\AS$ w.r.t.\ semantics $\sigma$).
\end{definition}

In sub-section \ref{sec:delobelle}, we explain the drawbacks of the impact measure by Delobelle and Villata \cite{DBLP:conf/ecsqaru/DelobelleV19} and introduce a revised version. We also motivate and showcase a new Shapley-based impact measure in sub-section \ref{sec:shap-imppact}.

Note that these two impact measures have been implemented and deployed in an online platform prototype, accessible at \url{https://impact-gradual-semantics.vercel.app/}.

\subsection{Revised version of impact measure from Delobelle and Villata}
\label{sec:delobelle}

In~\cite{DBLP:conf/ecsqaru/DelobelleV19}, the impact of an argument (or a set of arguments) on a target argument can be measured by computing the difference of acceptability degree of the target argument when this element exists and when it is deleted.
To capture this notion of deletion, two deletion operators need to be defined.
The \textit{argument deletion operator} $\compOpArg{}$ aims to delete a set of arguments from a given argumentation framework.
These changes have also a direct impact on the set of attacks because the attacks directly related to the deleted arguments (attacking as well as attacked) are automatically deleted too.\footnote{Note that for $\compOpArg{}$, it is necessary to specify the argument $y$ for which the degree will be measured, in order to avoid removing it from the AF if it belongs to the set of arguments whose impact on $y$ is to be measured.}
The \textit{attack deletion operator} $\compOpAtt{}$ focuses only on the removal of a set of attacks from the initial argumentation framework, thus keeping the same set of arguments.
%

\begin{definition}[Deletion operator]
\label{deletion_operator}
Let $\AS = (\A ,\C)$ be an AF, $X \subseteq \A$ be a set of arguments, $R \subseteq \C$ be a set of attacks and $y \in \A$.
The \textit{argument deletion operator} $\compOpArg{}$ is defined as $\AS \compOpArg{y} X = (\A',\C')$, where
    \begin{itemize}
        \item $\A' = \A \backslash (X \backslash \{y\})$;
	\item $\C' = \{ (x,z) \in \C \mid x \in \A \backslash X, z \in \A \backslash X \}$.
    \end{itemize}
The \textit{attack deletion operator} $\compOpAtt{}$ is defined as $\AS \compOpAtt{} R = (\A,\C'')$, where $\C'' = \C \backslash R$.
\end{definition}

In~\cite{DBLP:conf/ecsqaru/DelobelleV19}, to compute the impact of any set of arguments $X$ on an argument $y$, it is proposed to consider the degree of acceptability of $y$ when the direct attackers of $X$ are removed (i.e., when the arguments in $X$ are the strongest) from which the degree of acceptability of $y$ is deducted when all the arguments of $X$ are removed. 


\begin{definition}[\cite{DBLP:conf/ecsqaru/DelobelleV19}] 
\label{def:impactDVold}
Let $\AS = (\A,\C)$ be an AF, $y \in \A$ and $X \subseteq \A$. Let $\Sem$ be a gradual semantics. 
$\impDV{}{}$ is defined as follows:
$$\impDV{\Sem}{\AS}(X,y) =  \sigma_{\AS \compOpArg{y} \setAttackArg(X)}(y)  -  \sigma_{\AS \compOpArg{y} X}(y) $$
\end{definition}

Although this definition works well in general, there are particular cases (e.g., when self-attacks are allowed or when the direct attackers of $X$ impact $y$ via other paths) where the result does not correspond to what is expected.
For example, if we take $\AS_{s} = (\{a\},\{(a,a)\})$, applying Definition \ref{def:impactDVold} with the h-categoriser semantics (any other semantics gives the same result), we obtain $\impDV{\nameHbs}{\AS_{s}}(\{a\},a) = 0$ because it is not possible to delete the argument whose impact we want to evaluate (otherwise we can no longer calculate its degree with the gradual semantics).
This result can be considered counter-intuitive because the degree of $a$ is not maximal and the only argument who can have an impact on its degree is $a$ itself.

We propose a revised version of $\nameImpDV$ which takes into account this case and other problems while retaining the idea of the proposed approach.
Thus, instead of removing all the direct attackers from $X$ (i.e., using $\compOpArg{}$), we remove the direct attacks on each argument of $X$ (i.e., using $\compOpAtt{}$).

\begin{definition}[Revised version of $\nameImpDV$] \label{def:impact}
Let $\AS = (\A,\C)$ be an AF, $y \in \A$ and $X \subseteq \A$. Let $\Sem$ be a gradual semantics. 
$\impDV{}{}$ is defined as follows:
$$\impDV{\Sem}{\AS}(X,y) =  \sigma_{\AS \compOpAtt{y} \setAttackAtt(X)}(y)  -  \sigma_{\AS \compOpArg{y} X}(y) $$
\end{definition}

Regarding the example of the self-attacking argument, the problem mentioned earlier is solved with the revised version because the self-attack is removed from the right-hand side of the formula. Thus, we have $\impDV{\nameHbs}{\AS_{s}}(\{a\},a)$ =  $\Hbs_{\AS_{s} \compOpAtt{a} \{\emptyset\}}(a)  -  \Hbs_{\AS_{s} \compOpArg{a} \{a\}}(a) \simeq 0.618 - 1 = -0.382$.

In the following sections, $\impDV{}{}$ will refer to that defined in Definition \ref{def:impact}.

\color{black}

\begin{theorem} \label{theorem:interval_impDV}
Let $\sigma$ be a gradual semantics and $\AS = (\A,\C)$ be an AF. 
$\impDV{\sigma}{\AS}$ is an impact measure.

\end{theorem}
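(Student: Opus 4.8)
The plan is to verify that $\impDV{\sigma}{\AS}$ meets the definition of an impact measure, namely that it is a well-defined function $2^{\A} \times \A \to [-1,1]$. Concretely, for every $X \subseteq \A$ and every $y \in \A$ I must check two things: that the right-hand side of Definition~\ref{def:impact} is meaningful (both semantics evaluations are defined at $y$), and that the resulting value lies in $[-1,1]$.

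First I would establish well-definedness by confirming that $y$ survives in both argumentation frameworks appearing in the formula. For the minuend, the attack deletion operator $\compOpAtt{}$ leaves the argument set unchanged, so $\AS \compOpAtt{y} \setAttackAtt(X)$ still has argument set $\A$, and hence $y \in \A$; thus $\sigma_{\AS \compOpAtt{y} \setAttackAtt(X)}(y)$ is defined. For the subtrahend, the argument deletion operator removes only $X \backslash \{y\}$ from $\A$; since $y \notin X \backslash \{y\}$, the argument $y$ is retained in $\AS \compOpArg{y} X$, so $\sigma_{\AS \compOpArg{y} X}(y)$ is defined as well. This is precisely the reason $\compOpArg{}$ is indexed by $y$ in its definition.

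Next I would bound the value. By Definition~\ref{def:gradualSemantics}, a gradual semantics assigns to every argument a degree in $[0,1]$. Therefore both $\sigma_{\AS \compOpAtt{y} \setAttackAtt(X)}(y)$ and $\sigma_{\AS \compOpArg{y} X}(y)$ lie in $[0,1]$. The difference of two numbers in $[0,1]$ is bounded below by $0 - 1 = -1$ and above by $1 - 0 = 1$, so $\impDV{\sigma}{\AS}(X,y) \in [-1,1]$, which is exactly the required codomain.

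The argument is short and carries no genuine obstacle; the only subtlety worth flagging is the well-definedness step, where one must invoke the specific design of the argument deletion operator $\compOpArg{y}$ to guarantee that $y$ is never deleted, even when $y \in X$. Once that is in place, the range claim is immediate from the codomain of gradual semantics.
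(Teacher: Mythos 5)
Your proof is correct and takes the expected route: you check well-definedness of both terms (in particular that $\compOpArg{y}$ by construction never removes $y$ itself, and that the attack deletion operator leaves the argument set intact), and then conclude that the difference of two degrees in $[0,1]$ lies in $[-1,1]$, which is precisely the codomain required by the definition of an impact measure. Nothing is missing; the emphasis you place on the role of the subscript $y$ in the argument deletion operator is exactly the one subtlety worth flagging.
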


\begin{example}
Let $\AS$ be the AF depicted in Figure \ref{fig:AF}. We display the values of the revised impact of a set $X$ on an argument $y$ in Table \ref{tab:decomp-ex-hcat}.
Using the same examples of Table \ref{tab:decomp-ex-hcat}, the original impact measure of~\cite{DBLP:conf/ecsqaru/DelobelleV19} would yield the same values with the exception of the impact of $\{a_1 \}$ on $a_4$ with a value of $0$.
To understand this difference, let us first note that we need to compute the score of $a_4$ when $a_2$ is removed (because $\setAttackArg(\{a_1\}) = \{a_2\}$). 
Since $a_1$ and $a_2$ are symmetrical in terms of attacks received and given (i.e. they attack each other and both attack $a_3$), the impact of $\{a_1\}$ on $a_4$ is equal to 0. As for the revised impact measure, it yields 0.015 because the revised measure solves the problem raised in the scenario where "the direct attackers of X impact y via other paths”. Indeed, removing only the attack from $a_2$ to $a_1$ allows the attacks $(a_1,a_2)$, $(a_2,a_5)$ and $(a_2,a_3)$ to be maintained. Hence the  acceptability degree of $a_4$ when removing $(a_2, a_1)$ is different from the acceptability degree of $a_4$ when removing $a_1$, resulting in a non-null impact value of $\{a_1\}$ on $a_4$.

\begin{table}[t]
    \centering
    \resizebox{\linewidth}{!}{
    \begin{tabular}{cccccc}
         \toprule
         $X$ & $y$ & $\setAttackAtt(X)$ &$\Hbs_{\AS_{1}}(y)$ & $\Hbs_{\AS_{2}}(y)$ & $\impDV{\nameHbs}{\AS}(X,y)$ \\
         \midrule
         $\{a_1\}$ & $a_4$ & $\{(a_2,a_1)\}$ & 0.397 & 0.382 & 0.015 \\
         $\{a_5\}$ & $a_4$ & $\{(a_6,a_5)\}$ & 0.326 & 0.484  &-0.158 \\
         $\{a_8,a_{10}\}$ & $a_4$ & $\{(a_9,a_{10}),(a_9,a_8)\}$ & 0.339 & 0.514 & -0.175 \\
         $\{a_9\}$& $a_4$ & $\{(a_{10},a_9)\}$ &0.409 & 0.339  & 0.07 \\
         \multirow{2}{*}{$\{a_4\}$}& \multirow{2}{*}{$a_5$} & $\{(a_8,a_4),(a_5,a_4),$  &\multirow{2}{*}{0.5} & \multirow{2}{*}{0.5}  &\multirow{2}{*}{0} \\
         & & $(a_3,a_4)\}$ & & & \\
         \bottomrule
    \end{tabular}}
        \caption{Values of $\impDV{\nameHbs}{\AS}(X,y)$ with $\AS_{1} = \AS \compOpAtt{y} \setAttackAtt(X)$ and $\AS_{2} = \AS \compOpArg{y} X$
        .}
    \label{tab:decomp-ex-hcat}
\end{table}  
\end{example}

Note that for $\nameCS$, in order to limit the problem related to its violation of the Independence property (because of the normalisation factor $N$), we have adopted the same approach as \cite{DBLP:conf/ecsqaru/DelobelleV19}, i.e. $N$ is the same for the two subgraphs constructed in Definition \ref{def:impact}.

\subsection{Impact measure based on Shapley value}
\label{sec:shap-imppact}

In an argumentative setting, the Shapley contribution measure was used to calculate the contribution of direct attackers on an argument~\cite{amgoud_measuring_2017-1}.
In this sub-section, we define an impact measure based on this measure. This allows for the calculation of the impact of any set of arguments on another argument. 

Given a gradual semantics that satisfies the Attack Removal Monotonicity property (see Property \ref{prop-attack-removal-monotonicity} below), we recall that the Shapley measure is a function that associates to each attack a number in $[0,1]$ such that for each argument, the loss in its acceptability is equal to the sum of the values of all the attacks toward it.
The general intuition underlying this property is that attacks cannot be beneficial for arguments.

\begin{property}[Attack Removal Monotonicity~\cite{amgoud_measuring_2017-1}]
\label{prop-attack-removal-monotonicity}
A semantics $\sigma$ satisfies Attack Removal Monotonicity iff for any AF $\AS=(\A,\C)$, for every $a \in \A$, for every $R \subseteq \{(x,a) \mid x \in \Att(a) \}$, it holds that $\sigma_\AS(a) \leq \sigma_{\AS \compOpAtt{y} R}(a).$
\end{property}

In \cite{amgoud_measuring_2017-1}, they conjectured that h-categoriser satisfies Property \ref{prop-attack-removal-monotonicity}.  We agree with that conjecture and, from our implementations and experiments, we hypothesise that the max-based and card-based semantics also satisfy Property \ref{prop-attack-removal-monotonicity}. However, due to its nature, we show that the counting semantics does not satisfy Property \ref{prop-attack-removal-monotonicity} (see Example \ref{ex-cs-not-monotone}).

\begin{example}

Consider the two AFs $\AS_{1}$ and $\AS_{2}$ depicted in Figure \ref{fig:cs-monotonicity}. 
We see that $\CS_{\AS_{2}}(a_3) = 0.26$ is strictly higher than $\CS_{\AS_{1}}(a_3) = 0.02$.

\begin{figure}
    \centering
    \begin{tikzpicture}[scale=0.075]
\tikzstyle{every node}+=[inner sep=0pt]
\draw [black] (51.8,-26.1) circle (3);
\draw (51.8,-26.1) node[label={[label distance=8] 270:  \color{blue} \scriptsize  0.02}] {$a_3$};
\draw [black] (39.1,-19.7) circle (3);
\draw (39.1,-19.7) node[label={[label distance=8] 90:  \color{blue} \scriptsize  0.02}] {$a_2$};
\draw [black] (27.8,-26.1) circle (3);
\draw (27.8,-26.1) node[label={[label distance=8] 270:  \color{blue} \scriptsize  1.00}] {$a_1$};
\draw (22.8,-17) node {$\AS_{1}$};
\draw [black] (30.41,-24.62) -- (36.49,-21.18);
\fill [black] (36.49,-21.18) -- (35.55,-21.14) -- (36.04,-22.01);
\draw [black] (30.8,-26.1) -- (48.8,-26.1) ;
\fill [black] (48.8,-26.1) -- (48,-25.6) -- (48,-26.6);
\end{tikzpicture}
\quad
    \begin{tikzpicture}[scale=0.075]
\tikzstyle{every node}+=[inner sep=0pt]
\draw [black] (51.8,-26.1) circle (3);
\draw (51.8,-26.1) node[label={[label distance=8] 270:  \color{blue} \scriptsize  0.26}] {$a_3$};
\draw [black] (39.1,-19.7) circle (3);
\draw (39.1,-19.7) node[label={[label distance=8] 90:  \color{blue} \scriptsize  0.51}] {$a_2$};
\draw [black] (27.8,-26.1) circle (3);
\draw (27.8,-26.1) node[label={[label distance=8] 270:  \color{blue} \scriptsize  1.00}] {$a_1$};
\draw (56.8,-17) node {$\AS_{2}$};
\draw [black] (30.41,-24.62) -- (36.49,-21.18) node[midway,label={[label distance=3] 135:  \color{red} \scriptsize 0.49}]{};
\fill [black] (36.49,-21.18) -- (35.55,-21.14) -- (36.04,-22.01);
\draw [black] (41.78,-21.05) -- (49.12,-24.75) node[midway,label={[label distance=3] 45:  \color{red} \scriptsize -0.11}]{};
\fill [black] (49.12,-24.75) -- (48.63,-23.94) -- (48.18,-24.84);
\draw [black] (30.8,-26.1) -- (48.8,-26.1) node[midway,label={[label distance=3] 270:  \color{red} \scriptsize 0.85}]{};
\fill [black] (48.8,-26.1) -- (48,-25.6) -- (48,-26.6);
\end{tikzpicture}
    \caption{Counter-example for the satisfaction of Property \ref{prop-attack-removal-monotonicity} by the counting semantics. Values in blue represent the acceptability degrees for the two AFs for the counting semantics. Values in red represent the Shapley measure for each of the attacks.}
    \label{fig:cs-monotonicity}
\end{figure}
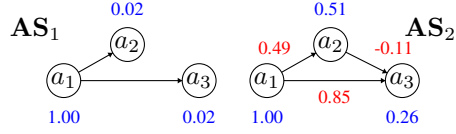
\label{ex-cs-not-monotone}
\end{example}

Let us now formally define the Shapley measure.

\begin{definition}[Extended Shapley measure]
\label{def:shapley_measure}
Let $\AS= (\A,\C)$ be an AF and $\sigma$ be a gradual semantics. The Shapley measure, w.r.t. $\sigma$, is the function $s: \C \to [-1,1]$ such that:
   $$ s((b,a)) = \sum_{X \subseteq Y} \frac{|X|!  (n - |X| - 1)!}{n!} (\sigma_{\AS_2}(a) -\sigma_{\AS_1}(a)),$$  where $Y = \{ (y,a) \mid y \in \Att(a)\} \setminus \{(b,a)\}, n = |\Att(a)|, \AS_1 = \AS \compOpAtt{} X, $ and $\AS_2= \AS \compOpAtt{} (X \cup \{(b,a)\})$.
\end{definition}

As impact measures must account for a variety of gradual semantics, Definition \ref{def:shapley_measure} generalises the Shapley measure of \cite{amgoud_measuring_2017-1} as a function which returns a value in $[-1,1]$ instead of $[0,1]$.
Indeed, since the counting semantics does not satisfy Property \ref{prop-attack-removal-monotonicity}, applying the Shapley measure can result in attacks with negative contributions.
Namely, the attacks with negative contributions are attacks that increase the degree of the target argument. In Figure \ref{fig:cs-monotonicity} (right), the attack $(a_2,a_3)$ increases the degree of $a_3$ by $0.11$ but $(a_1,a_3)$ decreases the degree of $a_3$ by $0.85$, resulting in a degree of $0.26$.
This new extended Shapley measure allows to highlight this surprising behaviour of gradual semantics which was not possible with the old version.
However, note that if a gradual semantics satisfies Attack Removal Monotonicity, then the value returned by the Shapley measure will remain in the interval $[0,1]$.

In the next definition, we define the Shapley-based impact measure based on this extended Shapley measure.
To compute the impact of any set of arguments $X$ on an argument $y$, the Shapley-based impact measure considers, for each argument $x$ in $X$, the difference between the contribution of even paths and odd paths from $x$ to $y$.

\begin{definition}[Shapley-based impact measure]
Let $\AS = (\A,\C)$ be an AF, $a \in \A, X \subseteq \A, \sigma$ be a gradual semantics, and $s$ be the Shapley measure of Definition \ref{def:shapley_measure}. 
The Shapley-based impact measure $\nameImpSI$ is $\impSI{\sigma}{\AS}(X,a) = \sum_{x \in X} \impSI{\sigma}{\AS}(\{x\},a)$, where:
\begin{multline*}
 \impSI{\sigma}{\AS}(\{x\},a) = \\  \Bigg( \Bigg. \sum_{(a_1, \dots, a_n) \in P_E(x,a)}\prod_{ 1\leq i\leq n-1 } s((a_{i+1}, a_i)) - \\
\sum_{(a_1, \dots, a_n) \in P_O(x,a)}\prod_{ 1\leq i\leq n-1 } s((a_{i+1}, a_i)) \Bigg. \Bigg)
\end{multline*}

where $P_O(x,a)$ (resp. $P_E(x,a)$) is the set of all odd (resp. even) paths from $x$ to $a$.
\label{def:shap-impact}
\end{definition}

The following theorem shows that the Shapley-based impact measure of all the arguments in the AF on a given argument will necessarily be negative or neutral.

\begin{theorem}
For any gradual semantics $\sigma$ that satisfies Attack Removal Monotonicity, $\AS = (\A,\C)$, and for any $x \in \A$, $\impSI{\sigma}{\AS}(\A,x) \in [-1, 0 ]$.
\label{theorem:limits}
\end{theorem}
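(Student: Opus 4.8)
The plan is to rewrite the quantity of interest as a single signed sum over paths and then to reduce the bound to two elementary properties of the Shapley measure $s$. Since $\impSI{\sigma}{\AS}(\A,v) = \sum_{x \in \A}\impSI{\sigma}{\AS}(\{x\},v)$ collects, over all possible sources $x$, the even-path contributions minus the odd-path contributions, summing over $x$ simply gathers every nonempty attack-path ending at the target $v$. I would therefore write
$$\impSI{\sigma}{\AS}(\A,v) = \sum_{p}(-1)^{|p|}\, w(p),$$
where $p$ ranges over the paths of length $|p| \geq 1$ ending at $v$ and $w(p)$ is the product of the Shapley values $s$ of the attacks traversed by $p$ (even paths carry sign $+1$, odd paths $-1$).

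The two facts about $s$ that I would establish are: (a) $s((b,a)) \geq 0$ for every attack; and (b) $\sum_{b \in \Att(a)} s((b,a)) \leq 1$ for every $a$. Property (a) follows from Attack Removal Monotonicity: in Definition \ref{def:shapley_measure} each marginal term $\sigma_{\AS_2}(a) - \sigma_{\AS_1}(a)$ compares two frameworks in which $\AS_2$ has exactly one more attack toward $a$ removed than $\AS_1$, so monotonicity makes it nonnegative, and the Shapley coefficients are nonnegative. Property (b) is the efficiency identity of the Shapley value for the characteristic function $X \mapsto \sigma_{\AS \compOpAtt{} X}(a)$ on the attacks toward $a$: the values sum to $\sigma_{\AS \compOpAtt{} R}(a) - \sigma_{\AS}(a)$, where $R$ is the set of all attacks toward $a$, a difference of two degrees in $[0,1]$ and hence at most $1$.

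From these I would extract the recursion obtained by classifying each path ending at $v$ by its last attack $(b,v)$ with $b \in \Att(v)$ and stripping that edge, which flips the parity and multiplies the weight by $s((b,v))$, giving
$$\impSI{\sigma}{\AS}(\A,v) = -\sum_{b \in \Att(v)} s((b,v))\bigl(1 + \impSI{\sigma}{\AS}(\A,b)\bigr).$$
When the attack graph is acyclic this is a well-founded recursion with base case $\impSI{\sigma}{\AS}(\A,v) = 0$ for unattacked $v$, and a straightforward induction closes the argument: if $\impSI{\sigma}{\AS}(\A,b) \in [-1,0]$ for each $b \in \Att(v)$, then every factor $1 + \impSI{\sigma}{\AS}(\A,b)$ lies in $[0,1]$, so by (a) and (b) the weighted sum lies in $[0,1]$ and its negation lies in $[-1,0]$.

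The main obstacle is the presence of cycles, where the set of paths is infinite and the recursion is no longer well-founded. To handle this uniformly I would pass to the nonnegative matrix $S$ indexed by $\A$ with $S_{v,b} = s((b,v))$ when $(b,v) \in \C$ and $0$ otherwise; by (a) it satisfies $S \geq 0$ entrywise and by (b) all its row sums are at most $1$, so $S\mathbf{1} \leq \mathbf{1}$. The signed path sum becomes $\impSI{\sigma}{\AS}(\A,\cdot) = H - \mathbf{1}$ with $H = \sum_{\ell \geq 0}(-S)^\ell \mathbf{1} = (I+S)^{-1}\mathbf{1}$. Grouping the Neumann series in consecutive pairs gives $H = \sum_{k \geq 0} S^{2k}(\mathbf{1} - S\mathbf{1}) \geq 0$, since $\mathbf{1} - S\mathbf{1} \geq 0$ and $S^{2k} \geq 0$; and the fixed-point identity $H = \mathbf{1} - SH$ together with $H \geq 0$ and $S \geq 0$ forces $H \leq \mathbf{1}$. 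Hence each component of $H$ lies in $[0,1]$ and $\impSI{\sigma}{\AS}(\A,v) = H_v - 1 \in [-1,0]$. The one technical point I expect to require care is convergence of the series, i.e.\ $\rho(S) < 1$; this holds as soon as all degrees are strictly positive (then every row sum is $< 1$ and $\|S\|_\infty < 1$), which is the case for $\nameHbs$, $\nameCar$ and $\nameMax$.
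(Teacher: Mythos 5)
Your argument is correct, and its two load-bearing facts are the ones any proof of this bound must rest on: nonnegativity of $s$ under Attack Removal Monotonicity (each marginal $\sigma_{\AS_2}(a)-\sigma_{\AS_1}(a)$ in Definition~\ref{def:shapley_measure} is nonnegative because $\AS_2$ removes one additional attack toward $a$), and the Shapley efficiency identity giving $\sum_{b\in\Att(v)} s((b,v)) = \sigma_{\AS\compOpAtt{}R}(v)-\sigma_{\AS}(v)\le 1$ for $R$ the set of attacks on $v$. The recursion obtained by stripping the last edge of every walk, $\impSI{\sigma}{\AS}(\A,v)=-\sum_{b\in\Att(v)} s((b,v))\bigl(1+\impSI{\sigma}{\AS}(\A,b)\bigr)$, plus the observation that each bracket then lies in $[0,1]$, is the heart of the matter. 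Where your write-up adds genuine value is the treatment of cycles: rather than an induction that is only well-founded on acyclic graphs, you package the signed walk sum as $H=(I+S)^{-1}\mathbf{1}$ and get both bounds from $H=\sum_{k\ge 0}S^{2k}(\mathbf{1}-S\mathbf{1})\ge 0$ and $H=\mathbf{1}-SH\le\mathbf{1}$. This is a clean, uniform argument that covers the self-attack and two-cycle examples the paper itself computes.

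Two caveats you should make explicit rather than leave to the last sentence. First, rewriting $\impSI{\sigma}{\AS}(\A,v)$ as a single signed sum over walks, and every subsequent regrouping (by source, by length, by last edge, in consecutive pairs), requires the even-walk and odd-walk sums to be separately finite; this is the same convergence needed for Definition~\ref{def:shap-impact} to be well defined on cyclic graphs at all, so it is a legitimate standing hypothesis, but it should be stated up front. Second, your verification of that convergence ($\|S\|_\infty<1$ because all degrees are strictly positive) covers only $\nameHbs$, $\nameCar$ and $\nameMax$, whereas the theorem quantifies over every semantics satisfying Attack Removal Monotonicity; for a semantics with $\sigma_\AS(v)=0$ the row sum at $v$ can equal $1$ and $\rho(S)=1$ is possible (a single self-attack with $s((v,v))=1$ already makes the series oscillate). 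This does not threaten the theorem---in such cases $\impSI{\sigma}{\AS}(\A,v)$ is simply undefined---but your proof should say under which well-definedness assumption it operates.
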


\begin{proof}
    \input{Proofs/TH1_impact_si_all_arguments_bounded_proof}
\end{proof}

The next property (on gradual semantics) implies that an argument cannot contribute for an absolute Shapley measure that is greater than the degree of the attacker.

\begin{property}[Bounded Loss]
We say that a gradual semantics $\sigma$ satisfies Bounded Loss iff for every $\AS = (\A,\C)$, $s = \Sh(\sigma, \AS)$, every $a,b \in \A$, $(a,b) \in \C$, then $|s((a,b))| \leq \sigma_\AS(a).$
\label{prop:limitation}
\end{property}

\begin{conjecture}
For any $\sigma \in \{ \Hbs, \Car, \Max, \CS\}$, $\sigma$ satisfies Bounded Loss.
\end{conjecture}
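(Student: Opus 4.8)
The plan is to use that, for a fixed attack $(a,b) \in \C$, the value $s((a,b))$ is a convex combination of marginal contributions. Writing $Y = \{(y,b) \mid y \in \Att(b)\} \setminus \{(a,b)\}$ and $n = |\Att(b)|$, the Shapley weights $\frac{|X|!\,(n-|X|-1)!}{n!}$ are nonnegative and sum to $1$ over $X \subseteq Y$, so $|s((a,b))| \le \max_{X \subseteq Y} |\Delta_X|$, where $\Delta_X = \sigma_{\AS \compOpAtt{} (X \cup \{(a,b)\})}(b) - \sigma_{\AS \compOpAtt{} X}(b)$ measures how much $b$'s degree changes when the attack $(a,b)$ is deleted on top of removing the attacks in $X$. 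It therefore suffices to bound each $|\Delta_X|$ by $\sigma_\AS(a)$; when $\sigma$ satisfies Attack Removal Monotonicity (Property~\ref{prop-attack-removal-monotonicity}) each $\Delta_X \ge 0$, so only the upper bound is at stake.

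First I would settle $\Hbs$ and $\Max$ on acyclic frameworks. Fix $X$, let $\alpha = \sigma(a)$, and let $S$ aggregate the remaining attackers of $b$ (a sum for $\Hbs$, a maximum for $\Max$); because no directed path returns to $b$, deleting $(a,b)$ leaves both $\alpha$ and $S$ unchanged and, moreover, $\sigma(a)$ in these reduced frameworks coincides with $\sigma_\AS(a)$. A one-line computation then gives $\Delta_X = \frac{\alpha}{(1+S)(1+S+\alpha)}$ for $\Hbs$ and $\Delta_X = \frac{\max(\alpha-S,0)}{(1+S)(1+\max(\alpha,S))}$ for $\Max$; in both cases the denominator is at least $1$, so $\Delta_X \le \alpha = \sigma_\AS(a)$, and the convex-combination bound closes these two cases.

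The hard part is threefold. When a directed cycle runs through $b$, deleting $(a,b)$ perturbs $\sigma_\AS(b)$, and this perturbation propagates around the cycle and alters $\alpha$ and $S$ themselves, so the closed forms above break down; here I would instead work with the fixed-point operators defining $\Hbs$ and $\Max$ (both monotone/contractive, with unique fixed points) and push a comparison principle through, although forcing the constant to be exactly $\sigma_\AS(a)$ is delicate. The counting semantics $\CS$ is harder still: it fails Attack Removal Monotonicity (Example~\ref{ex-cs-not-monotone}), so its marginal contributions are signed and must be estimated directly from the matrix-power/normalisation definition, a genuine perturbation analysis of the counting model. I expect the real obstacle, however, to be $\Car$: its denominator carries the additive cardinality term $|\Att(b)|$, so reinstating a single attack on an otherwise unattacked argument $b$ collapses its degree from $1$ to $\frac{1}{2+\alpha}$, a loss of $\frac{1+\alpha}{2+\alpha}$ which exceeds $\alpha$ whenever $\alpha < \frac{\sqrt5-1}{2} \approx 0.618$. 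Already the chain $d \to a \to b$ with $\Car_\AS(a) = \tfrac13$ yields $s((a,b)) = \tfrac47 > \tfrac13 = \Car_\AS(a)$ (here $b$ has a single attacker, so there is only one term and the averaging cannot rescue the bound, assuming the standard convention $\Car_\AS(b)=1$ for unattacked $b$). Thus I would anticipate proving Bounded Loss for $\{\Hbs, \Max\}$ via the fixed-point comparison and treating $\CS$ by a direct perturbation estimate, while $\Car$ appears to furnish a counterexample rather than a provable case, so the conjecture should most likely be weakened to exclude the additive-cardinality semantics.
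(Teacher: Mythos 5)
The statement you were asked to prove is presented in the paper only as a conjecture: the authors supply no proof, so there is nothing of theirs to compare your argument against, and your proposal must be judged on its own merits. The headline is that your $\Car$ counterexample is, as far as I can verify, correct and refutes the conjecture as written. In the chain $d \to a \to b$ one gets $\Car_\AS(d)=1$, $\Car_\AS(a)=\frac{1}{1+1+1}=\frac13$ and $\Car_\AS(b)=\frac{1}{2+1/3}=\frac37$; since $b$ has a single attacker, the Shapley sum reduces to one term of weight $1$, so $s((a,b)) = \Car_{\AS \compOpAtt{} \{(a,b)\}}(b) - \Car_\AS(b) = 1-\frac37=\frac47 > \frac13 = \Car_\AS(a)$, violating Bounded Loss. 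Your general observation --- that reinstating a single attack on an otherwise unattacked target costs $\frac{1+\alpha}{2+\alpha}$ under $\Car$, which exceeds $\alpha$ precisely when $\alpha < \frac{\sqrt5-1}{2}$ --- correctly isolates the additive cardinality term $|\Att(b)|$ in the denominator as the culprit. Unless the authors intend a nonstandard convention for unattacked arguments (their own figures use degree $1$), the conjecture must be weakened to exclude $\nameCar$ or the bound must be changed.

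The constructive half of your proposal is sound as far as it goes, but it is not a proof of the remaining cases. The Shapley coefficients do sum to one over $X \subseteq Y$, so $|s((a,b))| \le \max_X |\Delta_X|$ is legitimate, and your closed forms for $\Delta_X$ under $\Hbs$ and $\Max$ in the acyclic case are correct and yield $\Delta_X \le \sigma_\AS(a)$. However, when $b$ lies on a directed cycle, removing attacks on $b$ perturbs the degrees of $b$'s attackers (including $a$ itself), so the quantities $\alpha$ and $S$ in your formulas are no longer the degrees in $\AS$ and the comparison against $\sigma_\AS(a)$ has to be re-derived from the fixed-point equations; you flag this but do not carry it out. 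The $\CS$ case is left entirely open, and is genuinely delicate since $\CS$ fails Attack Removal Monotonicity and the marginal contributions are signed. The honest summary is therefore: you have refuted the conjecture for $\Car$, established it for $\Hbs$ and $\Max$ on acyclic frameworks only, and left the cyclic case and $\CS$ unresolved. None of this conflicts with the paper, which offers no proof, but the counterexample is a substantive finding that the authors should incorporate.
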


\begin{conjecture}
Let $\AS = (\A,\C)$ be an arbitrary AF.
For any gradual semantics $\sigma$ that satisfies Bounded Loss, it holds that for any $a \in \A$ and $X \subseteq \A$, $\impSI{\sigma}{\AS}(X,a) \in [-1, 1 ]$.
\label{conjecture:limits}
\end{conjecture}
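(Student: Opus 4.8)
The plan is to work entirely with the Shapley measure $s$ and to exploit the recursive structure of Definition \ref{def:shap-impact}. First I would reduce to a per-target recursion: splitting every path from a source $x \in X$ to $a$ at its last attack $(u,a)$, which flips the parity of the path, and summing over $x$, one obtains for every $a \in \A$
\[
\impSI{\sigma}{\AS}(X,a) \;=\; -\!\!\sum_{u \in \Att(a)} s((u,a))\,\bigl(\mathbf{1}_X(u) + \impSI{\sigma}{\AS}(X,u)\bigr),
\]
where $\mathbf{1}_X(u)=1$ if $u\in X$ and $0$ otherwise. Equivalently, letting $S$ be the $\A\times\A$ matrix with $S_{u,v}=s((u,v))$ for $(u,v)\in\C$ and $0$ elsewhere, and $\mathbf 1_X$ the indicator column vector of $X$, the impact is the row vector $\mathbf 1_X^{\top}\sum_{k\ge 1}(-S)^{k} = \mathbf 1_X^{\top}\bigl((I+S)^{-1}-I\bigr)$ evaluated at $a$, whenever this series converges. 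A first step is therefore to settle well-definedness on cyclic frameworks: either one restricts the paths of Definition \ref{def:shap-impact} to simple paths, so that all sums are finite, or one must exhibit a spectral/contraction condition (plausibly derivable from Bounded Loss along cycles) guaranteeing convergence of the geometric series.

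The core of the argument would be a strong induction establishing $\impSI{\sigma}{\AS}(X,a)\in[-1,1]$ for all $a$ simultaneously, along a well-founded order of the attack graph in the acyclic case, and through the fixed-point equation $(I+S^{\top})\phi_X=\mathbf 1_X$ with $\phi_X(a)=\mathbf 1_X(a)+\impSI{\sigma}{\AS}(X,a)$ in general. Bounded Loss enters through the per-edge bound $|s((u,a))|\le \sigma_\AS(u)$ from Property \ref{prop:limitation}. For orientation, it is worth recalling why the companion Theorem \ref{theorem:limits} is tractable: there $X=\A$, so $\mathbf 1_X\equiv 1$ is constant, and under Attack Removal Monotonicity one additionally has $s((u,a))\ge 0$ together with the Shapley efficiency identity $\sum_{u\in\Att(a)}s((u,a)) = \sigma_{\AS\compOpAtt{}\{(u,a)\mid u\in\Att(a)\}}(a)-\sigma_\AS(a)\le 1$. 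These two facts make the recursion self-stabilising inside $[0,1]$ for $\phi_\A$, which yields $\impSI{\sigma}{\AS}(\A,a)=\phi_\A(a)-1\in[-1,0]$.

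I expect the main obstacle to be precisely that this favourable structure disappears in the setting of the conjecture. First, Bounded Loss does not presuppose monotonicity, so the values $s((u,a))$ may be negative, exactly the phenomenon highlighted in Example \ref{ex-cs-not-monotone}, and the clean column bound $\sum_{u\in\Att(a)}s((u,a))\le 1$ is unavailable; the only aggregate estimate Bounded Loss yields is $\sum_{u\in\Att(a)}|s((u,a))|\le \sum_{u\in\Att(a)}\sigma_\AS(u)$, which can far exceed $1$. Second, for a proper subset $X$ the indicator $\mathbf 1_X$ is no longer constant, so the self-consistent interval bound does not close: even granting monotonicity, tracking $m=\sup_a \impSI{\sigma}{\AS}(X,a)$ and $\ell=\inf_a \impSI{\sigma}{\AS}(X,a)$ only yields $m+\ell\in[-1,0]$, which constrains their sum but neither extreme. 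The crux is thus a conservation-of-influence lemma: the aggregate \emph{signed} contribution reaching $a$ along all even paths minus all odd paths from $X$ stays within $[-1,1]$. Proving it seems to require genuinely using the cancellation between defender (even) and attacker (odd) paths rather than crude absolute-value bounds, combined with the Shapley efficiency identity and Bounded Loss applied edge-by-edge; a plausible route is to strengthen the induction hypothesis to a simultaneous, degree-weighted bound on $\phi_X$ that encodes this cancellation. The additivity $\impSI{\sigma}{\AS}(\A,a)=\impSI{\sigma}{\AS}(X,a)+\impSI{\sigma}{\AS}(\A\setminus X,a)$ together with Theorem \ref{theorem:limits} is a tempting lever, but on its own it bounds only the sum of the two sub-impacts and not either one, so it would have to be paired with an independent sign argument to be decisive.
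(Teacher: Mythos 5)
The statement you were asked to prove is labelled a \emph{conjecture} in the paper, and the authors provide no proof of it: the only surrounding material is an example showing that the bound fails when Bounded Loss is violated (so the hypothesis is necessary), and the open question is left open. There is therefore no paper proof to compare your attempt against. Your proposal, read on its own terms, is also not a proof: it is a well-organised reduction plus an honest inventory of obstacles. The recursion $\impSI{\sigma}{\AS}(X,a) = -\sum_{u \in \Att(a)} s((u,a))\bigl(\mathbf{1}_X(u) + \impSI{\sigma}{\AS}(X,u)\bigr)$ obtained by splitting each path at its last attack is correct (modulo the convergence caveat you rightly raise for cyclic frameworks, which the paper's own infinite-series computations confirm is a real issue since non-simple paths are counted), and your diagnosis of why the argument for Theorem \ref{theorem:limits} does not transfer -- possibly negative $s$-values without Attack Removal Monotonicity, loss of the column-sum bound $\sum_{u}s((u,a))\le 1$, and a non-constant indicator $\mathbf{1}_X$ breaking the self-stabilising interval argument -- is accurate.

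The genuine gap is the one you name yourself: the ``conservation-of-influence lemma'' bounding the signed even-minus-odd path aggregate is asserted as the crux but never established, and the Bounded Loss hypothesis is never actually \emph{used} to close any bound -- you only observe that the estimates it yields ($\sum_u |s((u,a))| \le \sum_u \sigma_\AS(u)$) are too weak on their own. You also correctly note that additivity over the partition $\{X, \A\setminus X\}$ combined with Theorem \ref{theorem:limits} constrains only the sum of the two sub-impacts. So the proposal terminates exactly where the hard work begins. This is not a criticism of your analysis, which is sound and arguably explains \emph{why} the authors could not prove the statement either; but you should be explicit that what you have written is a proof strategy with an unresolved central lemma, not a proof, and that the statement remains (both in the paper and after your attempt) a conjecture.
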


The first observation is that when a gradual semantics does not satisfy Property \ref{prop:limitation}, Conjecture \ref{conjecture:limits} is not satisfied as shown by the next example.

\begin{example}
    
Let us consider the AF
represented in Figure \ref{counterexample} and a gradual semantics $\sigma$ such that $\sigma_\AS(a_2) = 0$ and $\sigma_{\AS}(a_1) = 1/6$, meaning that $\sigma$ does not satisfy Property \ref{prop:limitation}.
We have that :
$\impSI{\sigma}{\AS}(\{a2, a6, a4\}, a1)=\impSI{\sigma}{\AS}(\{a2\}, a1) + \impSI{\sigma}{\AS}(\{a6\}, a1) + \impSI{\sigma}{\AS}(\{a4\}, a1) = -5/6 - 0.5*0.5*5/6 - 0.5*0.5*5/6 = -15/12 < -1$

    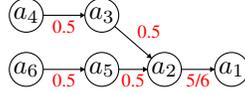
\begin{figure}[!h]
        \centering

        \begin{tikzpicture}[scale=0.075]
\tikzstyle{every node}+=[inner sep=0pt]
\draw [black] (62.1,-29.3) circle (3);
\draw (62.1,-29.3) node {$a_1$};
\draw [black] (50.2,-29.3) circle (3);
\draw (50.2,-29.3) node {$a_2$};
\draw [black] (39.1,-19.7) circle (3);
\draw (39.1,-19.7) node {$a_3$};
\draw [black] (25.7,-19.7) circle (3);
\draw (25.7,-19.7) node {$a_4$};
\draw [black] (25.7,-29.3) circle (3);
\draw (25.7,-29.3) node {$a_6$};
\draw [black] (39.1,-29.3) circle (3);
\draw (39.1,-29.3) node {$a_5$};
\draw [black] (28.7,-19.7) -- (36.1,-19.7) node[midway,label={[label distance=1] 270:  \color{red} \scriptsize 0.5}]{};
\fill [black] (36.1,-19.7) -- (35.3,-19.2) -- (35.3,-20.2);
\draw [black] (41.37,-21.66) -- (47.93,-27.34) node[midway,label={[label distance=1] 45:  \color{red} \scriptsize 0.5}]{};
\fill [black] (47.93,-27.34) -- (47.65,-26.44) -- (47,-27.19);
\draw [black] (42.1,-29.3) -- (47.2,-29.3) node[midway,label={[label distance=1] 270:  \color{red} \scriptsize 0.5}]{};
\fill [black] (47.2,-29.3) -- (46.4,-28.8) -- (46.4,-29.8);
\draw [black] (28.7,-29.3) -- (36.1,-29.3) node[midway,label={[label distance=1] 270:  \color{red} \scriptsize 0.5}]{};
\fill [black] (36.1,-29.3) -- (35.3,-28.8) -- (35.3,-29.8);
\draw [black] (53.2,-29.3) -- (59.1,-29.3) node[midway,label={[label distance=1] 270:  \color{red} \scriptsize 5/6}]{};
\fill [black] (59.1,-29.3) -- (58.3,-28.8) -- (58.3,-29.8);
\end{tikzpicture}

        \caption{Representation of an AF with 6 arguments. The red number on an edge $c$ represents $s(c)$.}
        \label{counterexample}
    \end{figure}
\end{example}

Shapley-based impact measure is not restricted to acyclic graphs.
Figure \ref{fig:shapley} shows the degree of all arguments for the h-categoriser semantics (in blue) and how the Shapley measure associates with each attack in $\C$, its intensity (in red). Using the Shapley-based impact measure (see Definition \ref{def:shap-impact}), the impact of $\{a_6\}$ on $a_4$ is $0.098$, of $\{a_5\}$ on $a_4$ is $-0.196$, and of $\{a_6, a_5\}$ on $a_4$ is $-0.098$. The impact of $\{a_{10}\}$ on $a_4$ is $ \left(\sum\limits_{i=1}^\infty (-0.382)^{2(i-1) + 1} \right) 0.382 \times 0.235  \simeq -0.0402$. 
Similarly, if we consider a self-attacking argument $a$, it would have a degree of $0.618$, its self-attack would have an intensity of $1 - 0.618 = 0.3820$, and the impact of $\{a\}$ on $a$ would be $\left( \sum\limits_{i=1}^\infty 0.3820^{2i}\right) - \left( \sum\limits_{i=1}^\infty 0.3820^{2(i-1)+1}\right) \simeq -0.276$, with the left (resp. right) element representing the impact of the argument defending (resp. attacking) itself.

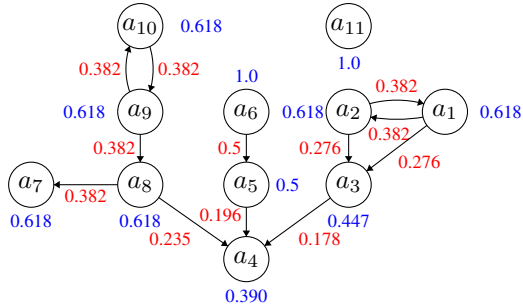
\begin{figure}
    \centering
    
\begin{center}
\begin{tikzpicture}[scale=0.1]
\tikzstyle{every node}+=[inner sep=0pt]
\draw [black] (39.3,-40.4) circle (3);
\draw (39.3,-40.4) node[label={[label distance=8] 270:  \color{blue} \scriptsize  0.390}] {$a_4$};
\draw [black] (52.9,-30.5) circle (3);
\draw (52.9,-30.5) node[label={[label distance=8] 270:  \color{blue} \scriptsize  0.447}] {$a_3$};
\draw [black] (25.2,-30.5) circle (3);
\draw (25.2,-30.5) node[label={[label distance=8] 270:  \color{blue} \scriptsize  0.618}] {$a_8$};
\draw [black] (39.3,-30.5) circle (3);
\draw (39.3,-30.5) node[label={[label distance=6] 0:  \color{blue} \scriptsize  0.5 }] {$a_5$};
\draw [black] (39.3,-20.8) circle (3);
\draw (39.3,-20.8) node[label={[label distance=8] 90:  \color{blue} \scriptsize  1.0 }] {$a_6$};
\draw [black] (25.2,-20.8) circle (3);
\draw (25.2,-20.8) node[label={[label distance=8] 180:  \color{blue} \scriptsize  0.618}] {$a_9$};
\draw [black] (52.9,-20.8) circle (3);
\draw (52.9,-20.8) node[label={[label distance=4] 180:  \color{blue} \scriptsize  0.618}] {$a_2$};
\draw [black] (65.7,-20.8) circle (3);
\draw (65.7,-20.8) node[label={[label distance=8] 0:  \color{blue} \scriptsize  0.618}] {$a_1$};
\draw [black] (25.2,-9.4) circle (3);
\draw (25.2,-9.4) node[label={[label distance=8] 1800:  \color{blue} \scriptsize  0.618}] {$a_{10}$};
\draw [black] (10.7,-30.5) circle (3);
\draw (10.7,-30.5) node[label={[label distance=8] 270:  \color{blue} \scriptsize  0.618}] {$a_7$};
\draw [black] (52.9,-9.4) circle (3);
\draw (52.9,-9.4) node[label={[label distance=8] 270:  \color{blue} \scriptsize  1.0}] {$a_{11}$};
\draw [black] (55.693,-19.722) arc (104.95602:75.04398:13.975) node[midway,label={[label distance=2] 90:  \color{red} \scriptsize  0.382}]{};
\fill [black] (62.91,-19.72) -- (62.26,-19.03) -- (62,-20);
\draw [black] (62.788,-21.509) arc (-80.42236:-99.57764:20.961) node[midway,label={[label distance=2] 270:  \color{red} \scriptsize  \!\!\!\!\!0.382}]{};
\fill [black] (55.81,-21.51) -- (56.52,-22.13) -- (56.68,-21.15);
\draw [black] (52.9,-23.8) -- (52.9,-27.5) node[midway,label={[label distance=2] 180:  \color{red} \scriptsize  0.276}]{};
\fill [black] (52.9,-27.5) -- (53.4,-26.7) -- (52.4,-26.7);
\draw [black] (63.31,-22.61) -- (55.29,-28.69) node[midway,label={[label distance=2] 280:  \color{red} \scriptsize  0.276}]{};
\fill [black] (55.29,-28.69) -- (56.23,-28.6) -- (55.63,-27.81);
\draw [black] (50.47,-32.27) -- (41.73,-38.63) node[midway,label={[label distance=3] 275:  \color{red} \scriptsize 0.178}]{};
\fill [black] (41.73,-38.63) -- (42.67,-38.57) -- (42.08,-37.76);
\draw [black] (27.66,-32.22) -- (36.84,-38.68)node[midway,label={[label distance=3] 260:  \color{red} \scriptsize 0.235}]{};
\fill [black] (36.84,-38.68) -- (36.48,-37.81) -- (35.9,-38.63);
\draw [black] (39.3,-33.5) -- (39.3,-37.4) node[midway,label={[label distance=1] 175:  \color{red} \scriptsize 0.196}]{};
\fill [black] (39.3,-37.4) -- (39.8,-36.6) -- (38.8,-36.6);
\draw [black] (39.3,-23.8) -- (39.3,-27.5) node[midway,label={[label distance=1] 180:  \color{red} \scriptsize 0.5}]{};
\fill [black] (39.3,-27.5) -- (39.8,-26.7) -- (38.8,-26.7);
\draw [black] (25.2,-23.8) -- (25.2,-27.5) node[midway,label={[label distance=1] 180:  \color{red} \scriptsize 0.382}]{};
\fill [black] (25.2,-27.5) -- (25.7,-26.7) -- (24.7,-26.7);
\draw [black] (23.8,-18.161) arc (-161.13492:-198.86508:9.466) node[midway,label={[label distance=1] 180:  \color{red} \scriptsize 0.382}]{};
\fill [black] (23.8,-12.04) -- (23.07,-12.63) -- (24.01,-12.96);
\draw [black] (26.467,-12.108) arc (16.77546:-16.77546:10.367) node[midway,label={[label distance=1] 0:  \color{red} \scriptsize 0.382}]{};
\fill [black] (26.47,-18.09) -- (27.18,-17.47) -- (26.22,-17.18);
\draw [black] (22.2,-30.5) -- (13.7,-30.5) node[midway,label={[label distance=1] 270:  \color{red} \scriptsize 0.382}]{};
\fill [black] (13.7,-30.5) -- (14.5,-31) -- (14.5,-30);
\end{tikzpicture}
\end{center}
    \caption{Intensity of attacks with Shapley measure for the h-categoriser semantics. Values in blue are the degrees of the arguments whereas the values in red represent the intensity of the attacks.}
    \label{fig:shapley}
\end{figure}

\subsection{Observations}
\label{sec:obs-comp}

In Table \ref{tab:ex-impact}, we provide the impact returned by $\nameImpDV$ and $\nameImpSI$ of several sets of arguments on $a_4$ in the AF represented in Figure \ref{fig:AF}. 
Since those two approaches follow different intuitions, we can make several observations. 

For example, we can see that, for $\sigma \in \{ \Hbs, \Car, \Max, \CS\}$, $\impDV{\Sem}{\AS}(\{a_8\},a_4) = \impDV{\Sem}{\AS}(\{a_8,a_{10}\},a_4)$ whereas $\impSI{\Sem}{\AS}(\{a_8\},a_4) \neq \impSI{\Sem}{\AS}(\{a_8,a_{10}\},a_4)$.
The idea is that the Shapley-based impact measure of a set on a target argument can be ``decomposed" as the sum of the impact of each argument of that set on the argument. Since $\impSI{\Sem}{\AS}(\{a_{10}\},a_4) \neq 0 $, it holds that $\impSI{\Sem}{\AS}(\{a_8\},a_4) \neq \impSI{\Sem}{\AS}(\{a_8,a_{10}\},a_4)$. 
In the case of the revised version of $\impDV{}{}$, the impact of a set $X$ on $y$ is the difference in acceptability degree of $y$ between when the direct external attacks on $X$ are removed and when $X$ is removed. 
Here, we have that $\sigma_{\AS \compOpAtt{a_4} \setAttackAtt(\{a_8,a_{10}\})}(a_4) = \sigma_{\AS \compOpAtt{a_4} \setAttackAtt(\{a_8\})}(a_4)$ and $\sigma_{\AS \compOpArg{a_4} \{a_8\}}(a_4) = \sigma_{\AS \compOpArg{a_4}\{a_8,a_{10}\}}(a_4)$, thus the equality.

For the max-based semantics, the impact of some set of arguments (e.g. $\{a_{1}\}$ or $\{a_5,a_6\}$) on $a_4$ is neutral when $\nameImpDV$ is used. 
This is not the case when $\nameImpSI$ as this measure is based on the extended Shapley measure which never assigns a value of 0 to any attacks (for the semantics considered).
Apart from these two cases, we note that the polarity of the impact (i.e., positive or negative) is often the same for our two approaches, given a set of arguments $X$ and a gradual semantics $\Sem$ that satisfies Attack Removal Monotonicity.

Our aim is now to provide a principle-based study to compare these impact measures to explain the common features and the differences observed in this sub-section.

\begin{table}[t]
    \centering
    \resizebox{\linewidth}{!}{
    \begin{tabular}{|c|c|c|c|c|c|c|c|c|}
    \hline
    & \multicolumn{4}{|c|}{$\impDV{\sigma}{\AS}(X,a_4)$}&\multicolumn{4}{|c|}{$\impSI{\sigma}{\AS}(X,a_4)$} \\
    \hline
        $X$ & ${\nameHbs}$ & $\nameCar$ & $\nameMax$& $\nameCS$ & ${\nameHbs}$ & $\nameCar$ & $\nameMax$ & $\nameCS$ \\
        \hline
         $\{ a_1\}$ & 0.015 & 0.001 & 0.0 & 0.072 & 0.036 & 0.056 & 0.019 & 0.026 \\
         $\{ a_1, a_2\}$ & 0.069 & 0.012 & 0.118 & 0.161 & 0.071 & 0.112 & 0.037 & 0.051 \\
         $\{ a_9\}$ & 0.069 & 0.011 & 0.118 & 0.107 & 0.105 & 0.236 & 0.061 & 0.076 \\
         $\{ a_8\}$ & -0.174 & -0.082 & -0.118 & -0.327 & -0.235 & -0.264 & -0.135 & -0.291 \\
         $\{a_{10} \}$ & -0.026 & -0.002 & -0.018 & -0.034 & -0.041 & -0.138 & -0.024 & -0.018\\ 
         $\{ a_8, a_{10}\}$ & -0.174 & -0.082 & -0.118 & -0.327 & -0.276 & -0.402 & -0.159 & -0.309 \\
         $\{ a_8, a_9, a_{10}\}$ & -0.124 & -0.072 & 0.0 & -0.246 & -0.17 & -0.167 & -0.098 & -0.233 \\
         $\{ a_5\}$ & -0.158 & -0.079 & -0.118 & -0.327 & -0.196 & -0.255 & -0.111 & -0.212 \\
         $\{ a_6\}$ & 0.064 & 0.011 & 0.118 & 0.107 & 0.098 & 0.17 & 0.056 & 0.069 \\
         $\{ a_5, a_6\}$ & -0.094 & -0.068 & 0.0 & -0.220 & -0.098 & -0.085 & -0.056 & -0.143 \\
         \hline
    \end{tabular}
    }
    \caption{Impacts of several sets of arguments on $a_4$ using different gradual semantics.}
    \label{tab:ex-impact}
\end{table}

\section{Desirable principles for impact measures}
\label{sec:desirable}

Impact measures are usually defined in a general way and can be paired with any gradual semantics (see e.g.\ ~\cite{DBLP:conf/ecsqaru/DelobelleV19}). 
In the rest of this section, we define the desirable principles of a pairing of an impact measure $\nameImp$ with a gradual semantics $\sigma$, denoted $\nameImp^\sigma$. 
Namely, $\nameImp^\sigma$ takes as input any AF $\AS$ and returns $\imp{\sigma}{\AS}$.

The reason that we do not define the desirable principles at the level of the impact measure itself is that many of the desirable behaviour that we would expect from an impact measure can disappear if the gradual semantics is ill-defined (e.g.\ if it does not satisfy anonymity).
Some of the principles we introduce and use for analysing impact measures under gradual semantics are inspired from the property analysis of gradual semantics~\cite{amgoud_acceptability_2017}, i.e., the Anonymity, Independence, and Directionality properties.

\subsection{List of principles} \label{sec:list_principles}

Unless stated explicitly, all the principles are deﬁned for an impact measure $\nameImp$ and a gradual semantics $\sigma$.

Let us start by introducing the notion of isomorphism between argumentation frameworks.

\begin{definition}
    Let $\AS = (\A,\C)$ and $\AS' = (\A', \C')$ be two AFs. An isomorphism from $\AS$ to $\AS'$ is a bijective function $f$ from $\A$ to $\A'$ such that for all $a,b \in \A$, $(a,b) \in \C$ iff $(f(a), f(b)) \in \C'$.\footnote{For a function $f$ and the set $X$, we use the standard notation $f(X)$ to mean $\{f(x) \mid x \in X\}.$}
    If $\AS = \AS'$, $f$ is called an automorphism.
\end{definition}

Impact Anonymity states that the impact of a set of arguments on an argument should not depend on the names of the arguments.

\begin{principle}[Impact Anonymity]\label{anonymity}
$\imp{\sigma}{}$ satisfies Impact Anonymity iff for any two AFs $\AS = (\A,\C)$, $\AS' = (\A',\C')$, and any isomorphism $f$ from $\AS$ to $\AS'$, the following holds: 
$\forall X \subseteq \A, a \in \A, \imp{\sigma}{\AS}(X,a) = \imp{\sigma}{\AS'}(f(X),f(a))$.
\end{principle}

Impact Independence states that the impact of a set of arguments $X$ on an argument $a$ should not depend on the arguments which are not connected to $X$ nor to $a$ by a path.

\begin{principle}[Impact Independence]\label{independence}

$\imp{\sigma}{}$ satisfies Impact Independence iff for any two AFs $\AS = (\A,\C)$, $\AS' = (\A',\C')$, where $\A \cap \A' = \emptyset$, the following holds: 
    $\forall X \subseteq \A, a \in \A, \imp{\sigma}{\AS}(X,a) = \imp{\sigma}{\AS \oplus \AS'}(X,a)$.
\end{principle}

Balanced Impact states that the sum of the impact of a set of arguments $X$ and the impact of an argument $x'$ on an argument $a$ should be equal to the impact of the union of $X$ with the set containing only argument $x'$ on $a$.
Please note that this principle is inspired from~\cite{DBLP:conf/ecsqaru/DelobelleV19} but we generalise it to any sets instead of singleton sets.
While this principle makes it easier to explain the impact of a set using the impact of its individual elements, it also prevents the modelisation of complex behaviour such as \textit{accrual}.

\begin{principle}[Balanced Impact]\label{balancedimpact}

$\imp{\sigma}{}$ satisfies Balanced Impact iff for any AF $\AS = (\A,\C)$, the following holds:
    $\forall X \subseteq \A, x' \in \A \setminus X, a \in \A,$ $\imp{\sigma}{\AS}(X,a) + \imp{\sigma}{\AS}(\{x'\},a) = \imp{\sigma}{\AS}(X \cup \{x'\},a)$.
\label{prop:balanced-impact}
\end{principle}

Void Impact states that an empty set of arguments has no impact on an argument.
\begin{principle}[Void Impact]\label{voidimpact}

$\imp{\sigma}{}$ satisfies Void Impact iff for any AF $\AS = (\A,\C)$, any $a \in \A$, $\imp{\sigma}{\AS}(\emptyset,a) = 0$.
\end{principle}

Impact Directionality states that the impact of a set of arguments $X$ on an argument $a$ remains unchanged in the case of adding an attack in which the target argument is not connected to $a$ via a path.
\begin{principle}[Impact Directionality]\label{directionality}

$\imp{\sigma}{}$ satisfies Impact Directionality iff for any two AFs $\AS = (\A,\C)$ and $\AS' = (\A,\C \cup \{(b,x)\})$, for any $a \in \A$, if there is no path from $x$ to $a$, then for all $X \subseteq \A$,
 $\imp{\sigma}{\AS}(X,a) = \imp{\sigma}{\AS'}(X,a)$.
\end{principle}

Impact Minimisation captures the fact that the impact of a set of arguments $X$ on an argument $a$ can be reduced to a minimal subset of $X$ from which arguments with no path to $a$ have been removed.

\begin{principle}[Impact Minimisation]\label{minimisation}

$\imp{\sigma}{}$ satisfies Impact Minimisation iff for any AF $\AS = (\A,\C)$, any $X \subseteq \A, x' \in X$ such that there is no path from $x'$ to $a$, and $a \in \A$, it holds that $\imp{\sigma}{\AS}(X,a) = \imp{\sigma}{\AS}(X \setminus \{x'\},a)$.
\end{principle}

Zero Impact states that the impact of an argument $x$ on an argument $a$ is zero if $x$ is not connected to $a$ by a path.
\begin{principle}[Zero Impact]\label{zeroimpact}

$\imp{\sigma}{}$ satisfies Zero Impact iff for any AF $\AS = (\A,\C)$ and $a,x \in \A$, if there is no path from $x$ to $a$ then $\imp{\sigma}{\AS}(\{x\},a) = 0$.

\end{principle}

Impact Symmetry says that if there is an automorphism between the attack structures of two arguments $a$ and $b$, then the impact of a set of arguments $X$ on $a$ is the same as the impact of the set containing the image of each argument of $X$ on $b$.

\begin{principle}[Impact Symmetry]\label{symmetry}

$\imp{\sigma}{}$ satisfies Impact Symmetry iff for any AF $\AS = (\A,\C)$, any $a,b \in \A$, the following holds: if $f$ is an automorphism from $ \AS\vert_{\Str(a) \cup \Str(b)}$ to $\AS\vert_{\Str(a) \cup \Str(b)}$ such that $f(a) = b$ and $f(b)=a$, then for all $X \subseteq \A, \imp{\sigma}{\AS}(X,a) = \imp{\sigma}{\AS}(f(X \cap (\Str(a) \cup \Str(b)) ),b)$.
\end{principle}

\begin{example}
    
We illustrate Impact Symmetry with Figure \ref{fig:prop8}. Note that $\Str(a) \cup \Str(b) = \{x,y,z,a,b\}$. Assume that $f(a)=b, f(b)=a, f(y)=y, f(x)=z$ and $f(z)=x$. This principle states that $\imp{\sigma}{\AS}(\{x,y\},a)  = \imp{\sigma}{\AS}(\{y,z\},b)$.

    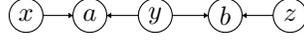
\begin{figure}
        \centering
       \begin{tikzpicture}[scale=0.075]
\tikzstyle{every node}+=[inner sep=0pt]
\draw [black] (11.9,-24.9) circle (3);
\draw (11.9,-24.9) node {$x$};
\draw [black] (23.2,-24.9) circle (3);
\draw (23.2,-24.9) node {$a$};
\draw [black] (34.8,-24.9) circle (3);
\draw (34.8,-24.9) node {$y$};
\draw [black] (47.2,-24.9) circle (3);
\draw (47.2,-24.9) node {$b$};
\draw [black] (58.7,-24.9) circle (3);
\draw (58.7,-24.9) node {$z$};
\draw [black] (14.9,-24.9) -- (20.2,-24.9);
\fill [black] (20.2,-24.9) -- (19.4,-24.4) -- (19.4,-25.4);
\draw [black] (31.8,-24.9) -- (26.2,-24.9);
\fill [black] (26.2,-24.9) -- (27,-25.4) -- (27,-24.4);
\draw [black] (37.8,-24.9) -- (44.2,-24.9);
\fill [black] (44.2,-24.9) -- (43.4,-24.4) -- (43.4,-25.4);
\draw [black] (55.7,-24.9) -- (50.2,-24.9);
\fill [black] (50.2,-24.9) -- (51,-25.4) -- (51,-24.4);
\end{tikzpicture}

        \caption{Illustration of Impact Symmetry}
        \label{fig:prop8}
    \end{figure}
\end{example}

The next principle states that whenever an argument's final strength differs from $1$, there is a set of arguments whose impact explain this difference. This principle is inspired from \cite{DBLP:journals/corr/abs-2401-08879} and ensures that impact measures that always return $0$ are not desirable.
Without loss of generality, this principle can be easily generalised to gradual semantics which maximal value is not $1$.

\begin{principle}[Impact Existence]\label{contribution_existence}

$\imp{\sigma}{}$ satisfies Impact Existence iff for any AF $\AS = (\A,\C)$, any $a \in \A$, the following holds: if $\sigma(a) \neq 1$ then there exists $X \subseteq A$ such that $\imp{\sigma}{\AS}(X,a) \neq 0$.
\end{principle}

\subsection{Links between principles}

Although most of our principles are independent, some of them are related because they deal with neutral impact where there is no path between the arguments whose impact we want to calculate and the target argument. 
This is the case with Impact Symmetry and Impact Minimisation which follows from some other principles.

\begin{theorem} \label{an+ind+direct+minim=symmetry}
Let $\nameImp$ be an impact measure and $\sigma$ be a gradual semantics. If $\imp{\sigma}{}$ satisfies Impact Anonymity, Impact Directionality, Impact Minimisation, and Impact Independence, then $\imp{\sigma}{}$ satisfies Impact Symmetry.
\end{theorem}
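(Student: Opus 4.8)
The plan is to prove Impact Symmetry directly by manipulating the defining equation, using the other four principles to transport quantities from the structure of $a$ to the structure of $b$. Fix an AF $\AS = (\A,\C)$, arguments $a,b \in \A$, and an automorphism $f$ of $\AS\vert_{\Str(a) \cup \Str(b)}$ with $f(a) = b$ and $f(b) = a$. Let $X \subseteq \A$ be arbitrary. Our goal is to show $\imp{\sigma}{\AS}(X,a) = \imp{\sigma}{\AS}(f(X \cap (\Str(a) \cup \Str(b))),b)$. The key observation is that Impact Symmetry is essentially Anonymity applied to an automorphism, \emph{except} that the automorphism only lives on the subgraph $\AS\vert_{\Str(a) \cup \Str(b)}$ rather than on all of $\AS$, and that $X$ may contain arguments outside this subgraph. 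The roles of Minimisation, Directionality, and Independence are precisely to strip away everything outside the relevant structure so that Anonymity can be applied cleanly.

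**Reducing $X$ to the relevant subgraph.**

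First I would use Impact Minimisation to replace $X$ by $X \cap \Str(a)$ on the left-hand side: every argument of $X$ that has no path to $a$ lies outside $\Str(a)$ and can be deleted without changing the impact, so $\imp{\sigma}{\AS}(X,a) = \imp{\sigma}{\AS}(X \cap \Str(a),a)$. Symmetrically, on the right-hand side, the image set $f(X \cap (\Str(a) \cup \Str(b)))$ can be reduced by Minimisation to those of its elements having a path to $b$; since $f$ is an automorphism of the structure swapping $a$ and $b$, it maps paths to $a$ onto paths to $b$, so the surviving elements are exactly $f(X \cap \Str(a))$. Thus both impacts have been reduced to sets living entirely inside $\Str(a) \cup \Str(b)$, and it suffices to prove $\imp{\sigma}{\AS}(X',a) = \imp{\sigma}{\AS}(f(X'),b)$ where $X' = X \cap \Str(a)$.

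**Restricting to the structure and applying Anonymity.**

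Next I would argue that the impact of $X'$ on $a$ depends only on the subgraph $\AS\vert_{\Str(a) \cup \Str(b)}$, not on attacks or arguments outside it. This is where Directionality and Independence enter: any attack of $\AS$ with target outside $\Str(a) \cup \Str(b)$ has no path to $a$ (by definition of the attack structure) and can be removed by repeated application of Impact Directionality without affecting $\imp{\sigma}{\AS}(X',a)$; once all such attacks are gone, the arguments of $\A \setminus (\Str(a) \cup \Str(b))$ form a disconnected component and can be deleted by Impact Independence. Hence $\imp{\sigma}{\AS}(X',a) = \imp{\sigma}{\AS\vert_{\Str(a) \cup \Str(b)}}(X',a)$, and analogously $\imp{\sigma}{\AS}(f(X'),b) = \imp{\sigma}{\AS\vert_{\Str(a) \cup \Str(b)}}(f(X'),b)$. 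Now $f$ is a genuine automorphism of $\AS\vert_{\Str(a) \cup \Str(b)}$ with $f(a)=b$, so Impact Anonymity applied to $f$ (taking $\AS' = \AS$ in the principle's statement) yields $\imp{\sigma}{\AS\vert_{\Str(a) \cup \Str(b)}}(X',a) = \imp{\sigma}{\AS\vert_{\Str(a) \cup \Str(b)}}(f(X'),f(a)) = \imp{\sigma}{\AS\vert_{\Str(a) \cup \Str(b)}}(f(X'),b)$, which chains together with the previous equalities to give the claim.

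**Anticipated obstacle.**

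The main obstacle is the bookkeeping in the restriction step: one must verify carefully that stripping attacks by Directionality and arguments by Independence does not alter $\Str(a) \cup \Str(b)$ itself nor the paths internal to it, so that $f$ remains an automorphism of the reduced framework and $X'$ still sits inside it. In particular, Directionality removes attacks one at a time, and I would need to confirm that removing an attack whose target is outside $\Str(a) \cup \Str(b)$ never creates or destroys a path into $a$ or $b$ — which holds precisely because such a target cannot lie on any path to $a$ or $b$. Everything else is a routine application of the four hypothesized principles in sequence.
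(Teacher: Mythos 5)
Your proof is correct and follows essentially the same route as the paper's: use Impact Minimisation to shrink both sets to the arguments with a path to the target, use Impact Directionality and Impact Independence to reduce the computation to $\AS\vert_{\Str(a)\cup\Str(b)}$, and then apply Impact Anonymity to the automorphism $f$ on that restricted framework. The bookkeeping you flag (attacks with target outside $\Str(a)\cup\Str(b)$ also have their source's removability guaranteed, and their removal leaves the restricted subgraph and $f$ intact) is handled correctly.
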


\begin{proof}
\input{Proofs/TH2_anonymoty+directionality+minimality+independence_to_symmetry}
\end{proof}

\begin{theorem}\label{balancedimpact+zeroimpactSOminimisation}
    Let $\nameImp$ be an impact measure and $\sigma$ be a gradual semantics. If $\imp{\sigma}{}$ satisfies Zero Impact and Balanced Impact, then $\imp{\sigma}{}$ satisfies Impact Minimisation.
\end{theorem}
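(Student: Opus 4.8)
The plan is to derive Impact Minimisation directly by a single application of Balanced Impact together with Zero Impact, so this is essentially a one-line combination once the variables are aligned correctly. The only care needed is in matching the roles of the sets in the two principles: Balanced Impact speaks of an element $x' \in \A \setminus X$ being added to $X$, whereas Impact Minimisation speaks of an element $x' \in X$ being removed from $X$. So the first step is to re-index appropriately.

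Concretely, fix an AF $\AS = (\A,\C)$, an argument $a \in \A$, and a set $X \subseteq \A$ containing some $x' \in X$ for which there is no path from $x'$ to $a$; our goal is $\imp{\sigma}{\AS}(X,a) = \imp{\sigma}{\AS}(X \setminus \{x'\},a)$. I would set $X' = X \setminus \{x'\}$, observe that $x' \in \A \setminus X'$, and apply Balanced Impact to the pair $(X', x')$, which gives
$$\imp{\sigma}{\AS}(X',a) + \imp{\sigma}{\AS}(\{x'\},a) = \imp{\sigma}{\AS}(X' \cup \{x'\},a) = \imp{\sigma}{\AS}(X,a).$$

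The second and final step is to invoke Zero Impact: since there is no path from $x'$ to $a$, we have $\imp{\sigma}{\AS}(\{x'\},a) = 0$. Substituting this into the displayed equation collapses the left-hand side to $\imp{\sigma}{\AS}(X',a) = \imp{\sigma}{\AS}(X \setminus \{x'\},a)$, which yields exactly the equality required by Impact Minimisation. Since $\AS$, $a$, $X$, and $x'$ were arbitrary (subject to the stated hypotheses), this establishes the principle in full generality.

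There is no real obstacle here; the argument is purely algebraic and does not depend on the particular semantics $\sigma$ or on any structural facts about paths beyond the single hypothesis that $x'$ has no path to $a$. The one point worth flagging explicitly is the rewriting $X = X' \cup \{x'\}$, which is valid precisely because $x' \in X$, and the fact that Balanced Impact requires $x' \notin X'$, which holds by construction since $X' = X \setminus \{x'\}$.
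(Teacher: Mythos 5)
Your proof is correct: setting $X' = X \setminus \{x'\}$, applying Balanced Impact to the pair $(X', x')$, and cancelling the term $\imp{\sigma}{\AS}(\{x'\},a) = 0$ via Zero Impact is exactly the intended one-step combination of the two principles, and it matches the paper's argument. No gaps.
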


\begin{theorem}\label{zero_impact_link}
    Let $\nameImp$ be an impact measure and $\sigma$ be a gradual semantics. If $\imp{\sigma}{}$ satisfies Void Impact and Impact Minimisation, then $\imp{\sigma}{}$ satisfies Zero Impact.
\end{theorem}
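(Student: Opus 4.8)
The plan is to derive Zero Impact by a direct two-step chaining of the two hypotheses, using the singleton set $\{x\}$ as the bridge. The statement of Zero Impact is a restriction of the setting to the case where $X$ is a singleton $\{x\}$ and the unique element $x$ has no path to the target $a$; Impact Minimisation is precisely the tool that lets us strip such path-free elements out of the first argument, and Void Impact handles the base case where nothing is left.

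First I would fix an arbitrary AF $\AS = (\A,\C)$ and arguments $a, x \in \A$ such that there is no path from $x$ to $a$; the goal is to show $\imp{\sigma}{\AS}(\{x\},a) = 0$. I would then instantiate Impact Minimisation with the set $X = \{x\}$ and the distinguished element $x' = x \in X$. The hypothesis of Impact Minimisation is met exactly because there is no path from $x$ to $a$, so it yields
$$\imp{\sigma}{\AS}(\{x\},a) = \imp{\sigma}{\AS}(\{x\} \setminus \{x\},a) = \imp{\sigma}{\AS}(\emptyset,a).$$
Finally I would invoke Void Impact, which gives $\imp{\sigma}{\AS}(\emptyset,a) = 0$, and combining the two equalities closes the argument. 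Since $\AS$, $a$, and $x$ were arbitrary subject only to the no-path condition, this establishes Zero Impact.

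There is essentially no obstacle here: the result is a straightforward composition, and the only thing to check carefully is that the side condition of Impact Minimisation (``there is no path from $x'$ to $a$'') coincides verbatim with the hypothesis of Zero Impact once we set $x' = x$, so no additional case analysis or reasoning about the semantics $\sigma$ or the specific measure $\nameImp$ is required. The proof is uniform over all gradual semantics and all impact measures satisfying the two named principles.
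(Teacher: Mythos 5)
Your proof is correct and is essentially the only natural argument: instantiate Impact Minimisation with $X=\{x\}$ and $x'=x$ to reduce $\imp{\sigma}{\AS}(\{x\},a)$ to $\imp{\sigma}{\AS}(\emptyset,a)$, then conclude by Void Impact. This matches the paper's approach; no further commentary is needed.
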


\begin{proof}
\input{Proofs/TH3_zeroImpact+Balanced_to_minimisation}
\end{proof}

As stated at the beginning of this section, defining principles solely at the level of impact measure is not relevant, as the calculation is based on the use of scores returned by gradual semantics.
A number of axiomatic studies have been carried out on these semantics in order to guarantee correct behaviour, but also to better understand the results returned.
In this way, it is possible to use the fact that certain properties satisfied by gradual semantics also guarantee the satisfaction of certain principles for $\impDV{\sigma}{}$ and $\impSI{\sigma}{}$.

\begin{proposition}\label{SI_independence_then_impact_independence}
   $\impSI{\Sem}{}$ and $\impDV{\Sem}{}$ satisfy Impact Independence for any gradual semantics $\Sem$ that satisfies Independence.
\end{proposition}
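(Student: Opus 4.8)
The plan is to lean on the Independence property of $\sigma$, which lets us strip off a disconnected component $\AS'$ whenever we evaluate $\sigma$ at an argument of $\A$, together with the observation that every auxiliary construction inside the two impact measures ``factors through'' the direct sum $\oplus$. Throughout, fix disjoint AFs $\AS = (\A,\C)$ and $\AS' = (\A',\C')$ with $\A \cap \A' = \emptyset$, and fix $X \subseteq \A$, $a \in \A$; the structural fact I use repeatedly is that $\C' \subseteq \A' \times \A'$, so there is no attack from $\A'$ into $\A$ and none from $\A$ into $\A'$.

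For $\impDV{\Sem}{}$ I would first note that, since no argument of $\A'$ attacks any argument of $X \subseteq \A$, the external attackers and hence the external attacks of $X$ coincide in $\AS$ and in $\AS \oplus \AS'$, so $\setAttackAtt(X)$ is one and the same subset of $\C$ in both frameworks. I would then prove two commutation identities, $(\AS \oplus \AS') \compOpAtt{a} \setAttackAtt(X) = (\AS \compOpAtt{a} \setAttackAtt(X)) \oplus \AS'$ and $(\AS \oplus \AS') \compOpArg{a} X = (\AS \compOpArg{a} X) \oplus \AS'$, by unfolding Definition \ref{deletion_operator}: removing the attacks $\setAttackAtt(X) \subseteq \C$ leaves $\C'$ untouched, and deleting the arguments $X \setminus \{a\} \subseteq \A$ touches neither $\A'$ nor any attack of $\C'$. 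In both identities the two summands keep disjoint argument sets, and the target $a$ lies in the $\A$-side (indeed $a \in \A \setminus (X \setminus \{a\})$, so $\compOpArg{a}$ never deletes $a$). Applying Independence of $\sigma$ to each identity yields $\sigma_{(\AS \oplus \AS') \compOpAtt{a} \setAttackAtt(X)}(a) = \sigma_{\AS \compOpAtt{a} \setAttackAtt(X)}(a)$ and $\sigma_{(\AS \oplus \AS') \compOpArg{a} X}(a) = \sigma_{\AS \compOpArg{a} X}(a)$; subtracting gives $\impDV{\Sem}{\AS \oplus \AS'}(X,a) = \impDV{\Sem}{\AS}(X,a)$.

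For $\impSI{\Sem}{}$, additivity over the elements of $X$ reduces the goal to singletons $\{x\}$ with $x \in \A$, and I would establish two invariances. First, the path sets $P_E(x,a)$ and $P_O(x,a)$ are unchanged by adding $\AS'$: a path from $x \in \A$ to $a \in \A$ can only traverse attacks of $\C$, since no attack crosses between the $\A$- and $\A'$-components. Second, the Shapley measure $s((b,c))$ of any attack $(b,c) \in \C$ is the same in both frameworks: Definition \ref{def:shapley_measure} only involves $\Att(c)$ (identical in both, as all attackers of $c \in \A$ lie in $\A$) and evaluations of $\sigma$ at $c$ on frameworks obtained by removing attacks toward $c$; each removed set $W$ is a subset of $\C$, so by the same commutation argument the resulting framework is $(\AS \compOpAtt{} W) \oplus \AS'$, and Independence of $\sigma$ keeps $\sigma(c)$ fixed. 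Hence every factor $s((a_{i+1},a_i))$ in Definition \ref{def:shap-impact} and every even/odd path sum is preserved, giving $\impSI{\Sem}{\AS \oplus \AS'}(\{x\},a) = \impSI{\Sem}{\AS}(\{x\},a)$; summing over $x \in X$ closes the case.

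The only genuine work is the bookkeeping in the two commutation identities and in verifying that $\setAttackAtt(X)$, $\Att(c)$, and the path sets are invariant under $\oplus$, all of which reduce to the single fact that $\A$ and $\A'$ are disjoint with no connecting attacks. I expect the main (though still routine) obstacle to be checking uniformly that every framework produced by the deletion operators appearing in the two definitions decomposes as (a framework on a subset of $\A$) $\oplus\, \AS'$, so that Independence of $\sigma$ can be applied at each step; once this is in place the conclusion is immediate.
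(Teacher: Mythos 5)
Your proof is correct and follows what is essentially the only natural route, the same one the paper takes: show that $\setAttackAtt(X)$, the deletion operators, the path sets, and the Shapley values of attacks in $\C$ are all invariant under adding the disconnected component $\AS'$ (each modified framework decomposing as a framework on a subset of $\A$ direct-summed with $\AS'$), and then invoke Independence of $\sigma$ termwise. The bookkeeping for the two commutation identities and the invariance of $\Att(c)$ and of the even/odd path sets is handled correctly.
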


\begin{proof}
\input{Proofs/Prop_Imp_SI_Hbs_Car_Max_Independence_then_Impact_Independence}
\end{proof}

\begin{proof}
\input{Proofs/Prop_Imp_DV_Hbs_Car_Max_Independence_then_Impact_Independence}
\end{proof}

\begin{proposition}\label{SI_directionality_then_impact_directionality}
    $\impSI{\Sem}{}$ and $\impDV{\Sem}{}$ satisfy Impact Directionality for any gradual semantics $\Sem$ that satisfies Directionality.
\end{proposition}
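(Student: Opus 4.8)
The plan is to derive both statements from the Directionality property of $\Sem$, using a single graph-theoretic observation. Write $\AS = (\A,\C)$ and $\AS' = (\A,\C \cup \{(b,x)\})$, and assume there is no path from $x$ to $a$, as in Impact Directionality (Principle~\ref{directionality}). The added attack $(b,x)$ enters $x$, so it cannot be the first edge of any path starting at $x$; consequently the set of arguments reachable from $x$ is the same in $\AS$ and $\AS'$, and ``no path from $x$ to $a$'' holds in both. Dually, any argument that has a path to $a$ cannot have a path from $x$ (otherwise $x$ would reach $a$). Finally, since deleting arguments or attacks (as $\compOpArg{a}$ and $\compOpAtt{a}$ do) only destroys paths, this no-path condition persists in every sub-AF occurring in the two measures. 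I would isolate this as a preliminary lemma and reuse it throughout.

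For $\impDV{\Sem}{}$ I would treat the two terms of Definition~\ref{def:impact} separately. For the right term $\Sem_{\AS \compOpArg{a} X}(a)$, a case split on whether $b,x \in X$ suffices: if either endpoint of $(b,x)$ lies in $X$, the edge is deleted by $\compOpArg{a} X$ and the two sub-AFs coincide; otherwise $\AS' \compOpArg{a} X$ is exactly $\AS \compOpArg{a} X$ with the extra edge $(b,x)$, and, since no path from $x$ to $a$ survives, Directionality gives equal degrees of $a$. For the left term $\Sem_{\AS \compOpAtt{a} \setAttackAtt(X)}(a)$, I would first note that $(b,x)$ is an external attack to $X$ exactly when $x \in X$ and $b \notin X$; in that case it is added to $\setAttackAtt(X)$ and then immediately deleted, so the sub-AFs coincide, while in every other case $\setAttackAtt(X)$ is unchanged and the two sub-AFs differ only by $(b,x)$, so Directionality again applies. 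Both terms being preserved, so is their difference.

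For $\impSI{\Sem}{}$ I would use additivity over $X$ (Definition~\ref{def:shap-impact}) to reduce to a singleton $\impSI{\Sem}{\AS}(\{x'\},a)$, which depends only on the odd and even paths from $x'$ to $a$ and on the Shapley values of the attacks along them. First, the two path sets coincide in $\AS$ and $\AS'$: a path in $\AS'$ traversing $(b,x)$ would continue from $x$ to $a$, contradicting the no-path hypothesis, so no new path appears. Second, each attack $(a_{i+1},a_i)$ occurring in such a path has head $a_i$ lying on a sub-path to $a$; hence $a_i$ can reach $a$ while $x$ cannot, so $a_i \neq x$ and there is no path from $x$ to $a_i$. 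Therefore the new attack, whose head is $x$, is never among the attacks toward $a_i$ that index Definition~\ref{def:shapley_measure}, and every degree $\Sem_{\AS \compOpAtt{} R}(a_i)$ entering the Shapley sum (with $R$ a set of attacks toward $a_i$, none equal to $(b,x)$) is preserved by Directionality. Thus each relevant Shapley value, and hence each singleton impact, is unchanged; summing over $x' \in X$ concludes.

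The main obstacle, I expect, is the second claim for $\impSI{\Sem}{}$: one must show that the \emph{whole} Shapley value of each attack on a path to $a$ is preserved, not merely the degree of $a$. This hinges on recognising that $s((a_{i+1},a_i))$ is determined solely by the degree of its head $a_i$ across the sub-AFs $\AS \compOpAtt{} R$, and on the reachability argument that insulates every such head from the new attack. The $\impDV{\Sem}{}$ part is comparatively routine, amounting to careful bookkeeping of how $X$, $\setAttackAtt(X)$, and the deletion operators interact with the single added edge.
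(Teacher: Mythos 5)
Your proposal is correct and follows what is essentially the paper's own route: a reachability-preservation observation, a case analysis on how the added attack $(b,x)$ interacts with $X$, $\setAttackArg(X)$/$\setAttackAtt(X)$ and the deletion operators for $\impDV{\Sem}{}$, and for $\impSI{\Sem}{}$ the reduction to singletons plus the argument that the path sets and the Shapley values of all attacks whose heads can reach $a$ are unchanged under Directionality. The one point worth making explicit is that ``no path from $x$ to $a$'' forces $x \neq a$ (via the trivial path), which you use implicitly when excluding $(b,x)$ from the attacks indexing each Shapley sum.
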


\begin{proof}
\input{Proofs/Prop_Imp_SI_Hbs_Car_Max_directionality_then_Impact_Directionality}
\end{proof}

\begin{proof}
\input{Proofs/Prop_Imp_DV_Hbs_Car_Max_Directionaliy_then_Impact_Diretionality}
\end{proof}

\begin{proposition}\label{zeroimpact_DV_Hbs_Car_Max}
   $\impDV{\Sem}{}$ satisfies Zero Impact for any gradual semantics $\Sem$ that satisfies Directionality and Independence.
\end{proposition}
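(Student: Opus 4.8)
The plan is to unfold Definition \ref{def:impact} and show that, under the hypotheses, the two terms of
$$\impDV{\sigma}{\AS}(\{x\},a) = \sigma_{\AS \compOpAtt{a} \setAttackAtt(\{x\})}(a) - \sigma_{\AS \compOpArg{a} \{x\}}(a)$$
coincide, each being equal to $\sigma_\AS(a)$, so their difference is $0$. First I would record that, since there is no path from $x$ to $a$, we have $x \notin \Str_\AS(a)$ and in particular $x \neq a$; write $S = \Str_\AS(a)$.

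The key auxiliary fact I would establish is: \emph{if $\sigma$ satisfies Independence and Directionality, then $\sigma_\AS(a) = \sigma_{\AS\vert_S}(a)$.} The crucial structural observation is that no attack enters $S$ from outside: if $(w,z) \in \C$ with $z \in S$, then concatenating $(w,z)$ with a path from $z$ to $a$ gives a path from $w$ to $a$, so $w \in S$. Hence, partitioning $\A$ into $S$ and $\A \setminus S$, the only attacks between the two blocks run from $S$ to $\A \setminus S$. I would then delete those crossing attacks one at a time, starting from $\AS$; each deleted attack has its target in $\A \setminus S$, and no element of $\A \setminus S$ has a path to $a$ (such a path would have to cross from $\A \setminus S$ into $S$, which no edge does). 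So Directionality applies at every step and preserves the degree of $a$. Once all crossing attacks are removed, $S$ and $\A \setminus S$ form two disconnected sub-AFs, i.e.\ $\AS\vert_S \oplus \AS\vert_{\A \setminus S}$, and Independence gives $\sigma_{\AS\vert_S \oplus \AS\vert_{\A\setminus S}}(a) = \sigma_{\AS\vert_S}(a)$ since $a \in S$. Chaining these equalities yields the auxiliary fact.

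It then remains to check that both AFs appearing in the definition have attack structure of $a$ equal to $S$ and induce the same subgraph $\AS\vert_S$. For $\AS_1 = \AS \compOpAtt{a} \setAttackAtt(\{x\})$, the deleted attacks all point into $x \notin S$, so none lies in $S \times S$, giving $\AS_1\vert_S = \AS\vert_S$; moreover no path to $a$ uses an attack into $x$ (it would continue through $x$ to $a$, contradicting the hypothesis), so $\Str_{\AS_1}(a) = S$. For $\AS_2 = \AS \compOpArg{a} \{x\}$, deleting $x$ and its incident attacks again touches nothing in $S$ and removes no path to $a$, so $\AS_2\vert_S = \AS\vert_S$ and $\Str_{\AS_2}(a) = S$. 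Applying the auxiliary fact to $\AS_1$ and $\AS_2$ gives $\sigma_{\AS_1}(a) = \sigma_{\AS_2}(a) = \sigma_{\AS\vert_S}(a) = \sigma_\AS(a)$, whence $\impDV{\sigma}{\AS}(\{x\},a) = 0$.

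The step I expect to be the main obstacle is the careful application of Directionality inside the auxiliary fact: Directionality is stated for adding a single attack, so I must remove the crossing attacks incrementally and argue that the ``no path to $a$'' condition on the moving target survives throughout — which is exactly secured by the observation that no edge ever runs from $\A \setminus S$ into $S$, so no argument of $\A \setminus S$ can reach $a$ in any intermediate framework. Everything else is bookkeeping about which attacks and arguments lie inside or outside $S$.
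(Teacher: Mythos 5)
Your proof is correct and follows essentially the same route as the paper: both terms of $\impDV{\sigma}{\AS}(\{x\},a)$ are reduced to $\sigma_{\AS\vert_{\Str_\AS(a)}}(a)$ by using Directionality to strip, one at a time, the attacks leaving $\Str_\AS(a)$ (sound because no attack ever enters $\Str_\AS(a)$ from outside) and Independence to discard the now-disconnected remainder, neither deletion operator having touched $\AS\vert_{\Str_\AS(a)}$. The only loose end is the inference ``no path from $x$ to $a$, hence $x\neq a$'', which presupposes that the trivial length-$0$ sequence counts as a path; if it does not, the case $x=a$ (an argument lying on no cycle) needs a one-line separate check, where both operators leave $a$ unattacked, $\Str$ of $a$ reduces to $\{a\}$ in both resulting frameworks, and the impact is again $0$.
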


\begin{proof}
\input{Proofs/Prop_Imp_DV_Independence+Directionalitysemantics_then_zero_impact}
\end{proof}

\begin{proposition}\label{directionalitySOminimisation}
     If $\impDV{\Sem}{}$ satisfies Impact Directionality, then it satisfies Impact Minimisation for any gradual semantics $\Sem$ that satisfies Independence.
\end{proposition}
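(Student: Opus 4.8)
The plan is to combine the hypothesis that $\impDV{\Sem}{}$ satisfies Impact Directionality with the Independence of $\Sem$ in order to peel off the argument $x'$ that has no path to the target $a$. Fix an AF $\AS=(\A,\C)$, a set $X\subseteq\A$, an argument $x'\in X$ with no path from $x'$ to $a$, and an argument $a\in\A$; note first that $x'\neq a$, since the length-$0$ path always joins $a$ to itself. Writing $X'=X\setminus\{x'\}$, the goal is to establish $\impDV{\Sem}{\AS}(X,a)=\impDV{\Sem}{\AS}(X',a)$.

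First I would use Impact Directionality to delete, one at a time, every attack of $\C$ incident to $x'$, obtaining an AF $\AS^*=(\A,\C^*)$ in which $x'$ is isolated. The observation that makes each deletion legal is that every such attack has a target with no path to $a$: for an incoming attack $(w,x')$ the target is $x'$ itself, which has no path to $a$ by assumption; for an outgoing attack $(x',z)$ the target is $z$, and if $z$ had a path to $a$ then prefixing it with the edge $(x',z)$ would yield a path from $x'$ to $a$, a contradiction. Since deleting attacks never creates new paths, these ``no path to $a$'' conditions persist throughout the deletion sequence, and since adding an attack $(b,x)$ cannot affect paths emanating from $x$, Impact Directionality may be read in the removal direction at each step. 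Because the principle quantifies over every subset uniformly, the same sequence of deletions gives both $\impDV{\Sem}{\AS}(X,a)=\impDV{\Sem}{\AS^*}(X,a)$ and $\impDV{\Sem}{\AS}(X',a)=\impDV{\Sem}{\AS^*}(X',a)$, reducing the claim to $\impDV{\Sem}{\AS^*}(X,a)=\impDV{\Sem}{\AS^*}(X',a)$.

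Next I would unfold both sides on $\AS^*$ using Definition \ref{def:impact}. Because $x'$ is isolated in $\AS^*$, it contributes no attack into $X$ and none out of $X'$, so the external-attack sets coincide, $\setAttackAtt(X)=\setAttackAtt(X')$ in $\AS^*$; hence the two attack-deletion terms $\sigma_{\AS^*\compOpAtt{a}\setAttackAtt(X)}(a)$ and $\sigma_{\AS^*\compOpAtt{a}\setAttackAtt(X')}(a)$ are computed on literally the same AF and are equal. For the two argument-deletion terms, $\AS^*\compOpArg{a}X$ and $\AS^*\compOpArg{a}X'$ differ only in that the latter retains $x'$; since $x'$ is isolated, $\AS^*\compOpArg{a}X'$ equals $(\AS^*\compOpArg{a}X)\oplus(\{x'\},\emptyset)$, i.e.\ the former together with a disjoint isolated argument. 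Independence of $\Sem$ then gives $\sigma_{\AS^*\compOpArg{a}X'}(a)=\sigma_{\AS^*\compOpArg{a}X}(a)$, so the two second terms agree as well, yielding $\impDV{\Sem}{\AS^*}(X,a)=\impDV{\Sem}{\AS^*}(X',a)$ and closing the chain of equalities.

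I expect the main obstacle to be the bookkeeping in the reduction step rather than any deep idea: one must check that the \emph{outgoing} attacks of $x'$ (not only the incoming ones) genuinely satisfy the hypothesis of Impact Directionality, which requires the short path argument above, and that $\setAttackAtt(X)$ and $\setAttackAtt(X')$ really coincide once $x'$ is isolated. The latter point is the subtle one, since $x'$ becomes external to $X'$ and would otherwise have contributed attacks from $x'$ into $X'$ to $\setAttackAtt(X')$; isolation is precisely what rules this out. Verifying these two points carefully is what makes the argument airtight, while everything else is a direct unfolding of Definition \ref{def:impact} together with a single appeal to Independence.
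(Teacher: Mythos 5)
Your proof is correct and follows the natural (and, as far as I can tell, the paper's) route: use Impact Directionality edge-by-edge to isolate $x'$ (checking that targets of both incoming and outgoing attacks of $x'$ have no path to $a$), observe that the two $\setAttackAtt$ terms then coincide, and invoke Independence of $\sigma$ to discard the isolated $x'$ from the argument-deletion term. Your preliminary observation that $x'\neq a$ is also the right reading of the "no path" hypothesis, since otherwise one can build counterexamples (e.g.\ a chain $c\to b\to a$ with $X=\{a,c\}$, $x'=a$) contradicting the paper's claim that $\impDV{\nameHbs}{}$ satisfies Impact Minimisation.
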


\begin{proof}
\input{Proofs/Prop_Imp_DV_Hbs_Car_Max_impact_directinality_then_minimization}

\end{proof}

\subsection{Axiomatic evaluation and discussion}

Table \ref{tab:propertiesXimpact} summarizes the results about the axiomatic evaluation of the studied impact measures and gradual semantics.
This axiomatic study only takes into consideration the gradual semantics defined in Section \ref{sec:background}. This choice is motivated by the fact that some of these semantics (such as h-categoriser and CS) have already been studied both axiomatically and in association with an existing impact measure, and the others have only been studied axiomatically but have unique features (such as the use of max to aggregate attacker degree) that we thought would be interesting to study in this work. 
We are aware that there exist other gradual semantics defined in the literature that could have been studied, but the idea was not to study them all but to show that, despite the different behaviors of these semantics, our impact measures can be applied to any gradual semantics.

\begin{theorem} \label{thm:propertiesXimpact}
    The principles of Table \ref{tab:propertiesXimpact} hold.
\end{theorem}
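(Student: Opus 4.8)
The plan is to read Theorem~\ref{thm:propertiesXimpact} as a compilation of one claim per cell of Table~\ref{tab:propertiesXimpact}, i.e.\ one claim per triple $(\nameImp,\sigma,\text{principle})$ with $\nameImp\in\{\nameImpDV,\nameImpSI\}$ and $\sigma\in\{\Hbs,\Car,\Max,\CS\}$, and to discharge every positive entry by invoking the meta-results already proved, reserving explicit small counterexamples for the negative entries. First I would separate the principles that hold \emph{regardless} of the semantics from those whose status is inherited from a property of $\sigma$. Impact Anonymity belongs to the first group for both measures: since $\impDV{\sigma}{\AS}$ and $\impSI{\sigma}{\AS}$ are defined only through graph-theoretic data (the operators $\compOpArg{y}$ and $\compOpAtt{}$, the sets $\setAttackArg$, $\setAttackAtt$, and the path sets $P_E,P_O$) together with the values of an anonymous $\sigma$, an isomorphism $f$ maps every intermediate object used at $(X,a)$ to the corresponding object at $(f(X),f(a))$, so the outputs coincide. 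Void Impact also holds for both, since for $\nameImpSI$ the defining sum ranges over $x\in\emptyset$ and for $\nameImpDV$ we have $\setAttackAtt(\emptyset)=\emptyset$ and $\AS\compOpArg{y}\emptyset=\AS$, so both terms of Definition~\ref{def:impact} equal $\sigma_\AS(y)$.

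For $\nameImpSI$ a further batch of entries is immediate from its path-based definition and holds for all four semantics: Balanced Impact is built in because $\impSI{\sigma}{\AS}(X,a)=\sum_{x\in X}\impSI{\sigma}{\AS}(\{x\},a)$; and Zero Impact and Impact Minimisation follow because an argument with no path to $a$ indexes neither an even nor an odd path in Definition~\ref{def:shap-impact}, so its singleton impact is $0$. By contrast, Balanced Impact fails for $\nameImpDV$, and the rows for $\{a_8\}$, $\{a_{10}\}$ and $\{a_8,a_{10}\}$ of Table~\ref{tab:ex-impact} already supply the counterexample.

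The semantics-dependent cells are settled next by exploiting that $\Hbs$, $\Car$ and $\Max$ satisfy Independence and Directionality whereas $\CS$ satisfies neither (Proposition~\ref{prop:CS_ind_directionality}). Thus Propositions~\ref{SI_independence_then_impact_independence} and~\ref{SI_directionality_then_impact_directionality} give Impact Independence and Impact Directionality for both measures under $\Hbs,\Car,\Max$; Proposition~\ref{zeroimpact_DV_Hbs_Car_Max} gives Zero Impact for $\nameImpDV$ there, and Proposition~\ref{directionalitySOminimisation} promotes Impact Directionality to Impact Minimisation for $\nameImpDV$. With Impact Anonymity, Impact Independence, Impact Directionality and Impact Minimisation in hand, Theorem~\ref{an+ind+direct+minim=symmetry} yields Impact Symmetry for both measures under these three semantics, and the link Theorems~\ref{balancedimpact+zeroimpactSOminimisation} and~\ref{zero_impact_link} can be used to cross-check the Void/Zero/Minimisation entries for internal consistency. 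Impact Existence is handled separately: for $\nameImpDV$, taking $X=\Att(a)$ gives a strictly nonzero value whenever $\sigma_\AS(a)\neq 1$ (removing all attacks incident to $a$ raises its degree above $\sigma_\AS(a)$), while for $\nameImpSI$ it follows from the Shapley efficiency identity $\sum_{(x,a)\in\C}s((x,a))=1-\sigma_\AS(a)\neq 0$, which forces a nonzero direct-attacker contribution.

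The remaining and hardest part is the counting semantics. Because $\CS$ violates Independence, Directionality and Attack Removal Monotonicity, none of the transfer propositions apply, so the corresponding cells must be settled by hand; in particular Theorem~\ref{theorem:limits} no longer constrains $\nameImpSI$. I would recycle the witnessing frameworks of Proposition~\ref{prop:CS_ind_directionality} for the Impact Independence and Impact Directionality failures (which ultimately trace back to the shared normalisation factor $N$) and of Example~\ref{ex-cs-not-monotone} where negative Shapley values appear, and verify numerically that each AF breaks exactly the intended principle. The main obstacle is precisely this bookkeeping for $\CS$: one must engineer minimal AFs in which an unconnected component or a directionality-violating attack perturbs $\CS_\AS(a)$ through $N$, and then confirm by direct computation that the other three semantics still satisfy the same principle, so that a single family of examples cleanly separates the positive from the negative entries without accidental coincidences.
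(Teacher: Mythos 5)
Your plan correctly discharges most of the table: the definitional cells (Void Impact for both measures; Balanced Impact, Zero Impact and Impact Minimisation for $\nameImpSI$ via its decomposition into singletons and paths), the counterexample for Balanced Impact of $\nameImpDV$ read off Table~\ref{tab:ex-impact}, and the $\Hbs,\Car,\Max$ columns via Propositions~\ref{SI_independence_then_impact_independence}--\ref{directionalitySOminimisation} and Theorem~\ref{an+ind+direct+minim=symmetry}. However, there are genuine gaps in the cells you leave to ``bookkeeping.'' The most visible one is Impact Symmetry under $\nameCS$: the table asserts it holds for both $\impDV{\nameCS}{}$ and $\impSI{\nameCS}{}$, but your only route to Impact Symmetry is Theorem~\ref{an+ind+direct+minim=symmetry}, whose hypotheses include Impact Independence and Impact Directionality --- exactly the two principles that \emph{fail} for $\nameCS$. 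You need a direct argument for those two cells, and none is sketched. The same problem affects Zero Impact and Impact Minimisation for $\impDV{\nameCS}{}$ (claimed true, but Propositions~\ref{zeroimpact_DV_Hbs_Car_Max} and~\ref{directionalitySOminimisation} do not apply to $\nameCS$); here the key ingredient is the convention, stated after Table~\ref{tab:decomp-ex-hcat}, that the normalisation factor $N$ is shared between the two subgraphs of Definition~\ref{def:impact}, which is what makes the two degrees of $a$ coincide when $x$ has no path to $a$. ``Verify numerically'' cannot establish these universally quantified positive cells; it only works for the negative ones.

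The second gap is Impact Existence for $\nameImpSI$. From Shapley efficiency you get some direct attacker $b$ with $s((b,a))\neq 0$, but $\impSI{\sigma}{\AS}(\{b\},a)$ is a \emph{difference} between sums of path products over all even and all odd paths from $b$ to $a$, so a nonzero direct-attack term can be cancelled by longer paths; a nonzero Shapley value on one attack does not ``force'' a nonzero impact of any particular set. This is precisely the subtlety the paper signals by restricting the $\impSI{\nameCS}{}$ entry to graphs with at least two arguments of maximum in-degree --- a restriction your argument would erase if it were sound. Your Impact Existence argument for $\impDV{\sigma}{}$ with $X=\Att(a)$ is essentially right for $\Hbs,\Car,\Max$ (the second term becomes $1$ while the first stays below $1$ because attacker degrees are strictly positive), but it is not justified for $\nameCS$, where an attacked argument need not have degree below that of an unattacked one. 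Finally, Impact Anonymity silently assumes the four semantics are themselves anonymous; true, but it should be stated, since you correctly insist elsewhere that principles are properties of the pair $(\nameImp,\sigma)$.
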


\begin{table*}[t]
    \centering
    \begin{tabular}{|c|c|c|c|c|c|c|c|c|}
    \hline
         Principle&$\impDV{\nameHbs}{}$& $\impSI{\nameHbs}{}$ & $\impDV{\nameMax}{}$ & $\impSI{\nameMax}{}$ &  $\impDV{\nameCar}{}$&  $\impSI{\nameCar}{}$& $\impDV{\nameCS}{}$ & $\impSI{\nameCS}{}$\\
         \hline
         Impact Anonymity & \true & \true & \true & \true & \true & \true & \true  & \true \\
         Impact Independence & \true & \true  & \true  & \true  & \true  & \true  & \false  & \false \\
         Balanced Impact & \false  & \true & \false & \true & \false & \true & \false & \true\\
         Void Impact & \true  & \true & \true  & \true & \true  & \true & \true & \true\\
         Impact Directionality & \true   & \true  & \true   & \true  & \true   & \true  & \false & \false\\
         Impact Minimisation & \true & \true &  \true & \true & \true & \true & \true & \true\\
         Zero Impact & \true & \true  & \true & \true & \true & \true  & \true &\true \\
         Impact Symmetry &  \true & \true &  \true &  \true &  \true & \true & \true & \true\\
         Impact Existence & \true & \true & \true & \true & \true & \true & \true & \true$^{\geq 2}$\\
         \hline
    \end{tabular}
    \caption{Properties satisfied by the studied impact measures ($\nameImpDV$ and $\nameImpSI$) and gradual semantics ($\nameHbs$, $\nameMax$, $\nameCar$, $\nameCS$).
    The symbol \true (resp. \false) means that the property is satisfied (resp. violated) by the impact measure and the gradual semantics considered. The symbol \true$^{\geq 2}$ means that the property is satisfied on the class of graphs where there are at least two arguments with the maximum in-degree.}
    \label{tab:propertiesXimpact}
\end{table*}

We can observe that $\nameImpDV^{\sigma}$ and $\nameImpSI^{\sigma}$ both satisfy Impact Anonymity, Void Impact, Impact Minimisation, Zero Impact and Impact Symmetry for $\sigma \in \{ \Hbs, \Car, \Max, \CS\}$. 

The principles that are not satisfied by $\nameImpDV^{\sigma}$ and $\nameImpSI^{\sigma}$ include Impact Independence and Impact Directionality only for $\nameCS$.
This can be explained by the behaviour of $\nameCS$ that does not satisfy the Independence and the Directionality properties. 
The main axiomatic difference between the two measures concerns the Balanced Impact principle because $\nameImpSI$ satisfies this principle whatever the gradual semantics used in our study, whereas this is never the case for $\nameImpDV$.
The AF depicted in Figure \ref{fig:shapley} shows, for example, that $\impDV{\sigma}{\AS}(\{a_{8},a_{10}\},a_4) \neq \impDV{\sigma}{\AS}(\{a_{8}\},a_4) + \impDV{\sigma}{\AS}(\{a_{10}\},a_4)$ for $\sigma \in \{ \Hbs, \Car, \Max, \CS\}$.
Note also that, although \cite[Proposition 2]{DBLP:conf/ecsqaru/DelobelleV19} states that Balanced Impact is satisfied by the original definition of $\impDV{\CS}{}$ (cf. Definition \ref{def:impactDVold}), this AF can also be used as a counterexample to prove that it is not satisfied.

Moreover, while most of the proofs have been done on general graphs, we have proved the satisfaction of Impact Existence for $\impSI{\CS}{}$ only on the class of graphs where there are at least two arguments with the maximum in-degree, however we conjecture that Impact Existence is also satisfied in the general case.

This principle compliance study can be helpful for choosing which (impact, semantics) pair to use for a specified application. It is interesting to note here that this choice depends on two factors: The satisfiability of the Balanced Impact principle and the satisfiability of the Impact Independence and the Impact Directionality principles.
If an application demands that all principles should be satisfied, then we would use $\nameImpSI$ with one of the three following semantics ($\nameHbs$, $\nameMax$, $\nameCar$).
If an application demands the use of $\nameImpDV$ then we know that, depending on the semantics we choose, we will have Impact Independence and Impact Directionality satisfied (or not).  
Finally, if an application demands the use of the counting semantics, then we know that for both impact measures, the Impact Independence and the Impact Directionality principles are not satisfied while Balanced Impact is satisfied only by $\impSI{}{}$.

\section{Related Work}

\label{sec:related-discussion}

Yin et al.~\cite{yin2023argument} introduced an impact measure to explain the the Discontinuity Free Quantitative Argumentation Debate (DF-QuAD) gradual semantics~\cite{rago_discontinuity-free_2016} in quantitative bipolar argumentation frameworks (QBAFs).
Their impact measure quantifies the contribution of an argument towards topic arguments in QBAFs. Although their work is also inspired by feature attribution explanation methods in machine learning, Yin et al. focus on highlighting the sensitivity of a topic argument's final acceptability degree  w.r.t.\ the other arguments' initial weights. 
Their impact measure is defined only for individual arguments, in acyclic QBAFs and only for the DF-QuAD semantics. However, our two impact measures are both defined for any set of arguments and can be paired with any gradual semantics.
Moreover, the properties they study are explanation-focused, used to assess and characterise their impact measure's ability of providing robust and faithful explanations.
These properties are mostly inspired by properties for machine learning models' explanations such as sensitivity and fidelity. Here, we propose contribution-focused properties, meaning that we evaluate how each pair (impact, semantics) contributes to the final acceptability degree of an argument, w.r.t.\ the argumentation graph's structure. However, we also intend to explore the explanation-focused properties in future work. Namely, we want to study how to produce ``good" explanations for gradual semantics using the impact measures that we defined in this paper.

Kampik et al. \cite{DBLP:journals/corr/abs-2401-08879} propose \textit{contribution functions} and principles in the context of quantitative bipolar argumentation graphs. Contrary to our work, their contribution functions are currently only defined for acyclic graphs and only measure the influence of a single source argument on a topic argument. 
We also note that while they also introduce a Shapley-based contribution, its computation necessitate the addition of the source argument to all possible subgraphs that already contain the topic argument. We argue that this is more computationally expensive than our Shapley-based impact measure which is based on \cite{amgoud_measuring_2017-1}.

The notion of impact for gradual semantics has also been studied by Himeur et al.~\cite{DBLP:conf/sgai/HimeurYBC21}. They measured the impact of agents on arguments in a debate, allowing to identify the agent that is the most influential for a specified argument.   Although the impact measure defined returns the individual impact of an argument on another argument, they defined different aggregation functions that can be used to merge the impact of all the arguments belonging to the same agent, on a particular argument. The impact measure of Himeur et al.~\cite{DBLP:conf/sgai/HimeurYBC21} shares similarities with to the one defined in~\cite{DBLP:conf/ecsqaru/DelobelleV19}. Moreover, their impact measure is studied only for Euler-based semantics~\cite{amgoud2018evaluation} and DF-QuAD w.r.t.\ a set of principles and aggregation functions.

\section{Conclusion and Future Work}
In this paper, we studied the notion of impact of a set of arguments on an argument under gradual semantics. 
For this purpose, we proposed two impact measures: $\impDV{}{}$, a revision of the  measure from \cite{DBLP:conf/ecsqaru/DelobelleV19}, and $\impSI{}{}$, a novel measure based on the Shapley Contribution Measure, which itself is derived from the  measure introduced in \cite{amgoud_measuring_2017-1}.  

We provided a principle-based analysis of these two impact measures under four gradual semantics: h-categoriser ($\nameHbs$), card-based ($\nameCar$), max-based ($\nameMax$) and counting semantics ($\nameCS$). 
For three of the gradual semantics ($\nameHbs$, $\nameMax$, $\nameCar$), we show that $\impSI{\sigma}{}$ satisfies all the principles, while $\impDV{\sigma}{}$ satisfies them all except Balanced Impact.
Concerning $\nameCS$, our two impact measures do not satisfy Impact Independence and Impact Directionality because the two associated properties for gradual semantics are not satisfied by $\nameCS$.

For future work, we plan to study how  to generate explanations for gradual semantics based on these two impact measures.
Providing explanations in abstract argumentation frameworks has been explored for extension-based semantics~\cite{fan2015computing,fan2015explanations,saribatur2020explaining,liao2020explanation,baumann2021choices,borg2021basic,borg_contrastive_2021}. Namely,
Fan and Toni~\cite{fan2015computing} propose to provide explanations in abstract argumentation and assumption-based argumentation frameworks. 
They also propose another method of explanation using the notion of Dispute Trees~\cite{dung2007computing,dung2009assumption}. This work by Fan and Toni was followed by another one that uses dispute trees to provide explanations for arguments which are not acceptable w.r.t.\ the admissible semantics~\cite{fan2015explanations}. 
Borg and Bex~\cite{borg2021basic} propose to compute explanations for credulously/skeptically accepted and non-accepted arguments w.r.t.\ several extension-based semantics, as sets of arguments. 
Borg and Bex~\cite{borg_contrastive_2021} extend this framework to provide contrastive explanations for argumentation-based conclusions in argumentation frameworks derived from an abstract or structured setting. The idea is to explain the acceptance of an argument called the fact and the non-acceptance of a set of arguments called the foil, by returning the elements that caused such acceptance and non-acceptance. 

However, providing explanations in abstract argumentation frameworks has not been explored for gradual semantics.
These results allow us to move to the next step which is to study how we can use 
impact measures to provide explanations for gradual semantics outcomes. 
To generate and deliver explanations tailored to a specific audience,  we want to incorporate insights from the work of Miller~\cite{miller2019explanation}  who analyses various  existing social sciences studies focused on human explicability. 
We also plan to assess our argumentation explanations w.r.t.\ several desirable properties for evaluating explainable AI~\cite{amgoud2022axiomatic,nauta2023anecdotal}.
We also plan to use impact measures to deliver explanations that can help distinguish two semantics satisfying common properties. 
For example, two semantics can allow for the attack to weaken its target, however with different degrees of weakening. 
The current properties are incapable of capturing this difference.

\section*{Acknowledgement}

This work benefited from the support of the project AGGREEY ANR-22-CE23-0005 of the French National Research Agency (ANR).

\newpage

\bibliographystyle{plain}  
\bibliography{arxiv}

\end{document}